\theoremstyle{plain}
\newtheorem{theorem}{Theorem}[section]
\newtheorem{proposition}[theorem]{Proposition}
\newtheorem{lemma}[theorem]{Lemma}
\newtheorem{corollary}[theorem]{Corollary}
\theoremstyle{definition}
\newtheorem{definition}[theorem]{Definition}
\theoremstyle{remark}
\title{Learning Topology-Driven Multi-Subspace Fusion \\for Grassmannian Deep Networks}
\author{
    Xuan Yu\textsuperscript{\rm },
    Tianyang Xu\textsuperscript{\rm }\thanks{Corresponding author.}
}
\begin{document}

\maketitle

\begin{abstract}
Grassmannian manifold offers a powerful carrier for geometric representation learning by modelling high-dimensional data as low-dimensional subspaces. 
However, existing approaches predominantly rely on static single-subspace representations, neglecting the dynamic interplay between multiple subspaces critical for capturing complex geometric structures. 
To address this limitation, we propose a topology-driven multi-subspace fusion framework that enables adaptive subspace collaboration on the Grassmannian. 
Our solution introduces two key innovations: (1) Inspired by the Kolmogorov-Arnold representation theorem, an adaptive multi-subspace modelling mechanism is proposed that dynamically selects and weights task-relevant subspaces via topological convergence analysis, and (2) a multi-subspace interaction block that fuses heterogeneous geometric representations through Fréchet mean optimisation on the manifold. 
Theoretically, we establish the convergence guarantees of adaptive subspaces under a projection metric topology, ensuring stable gradient-based optimisation.
Practically, we integrate Riemannian batch normalisation and mutual information regularisation to enhance discriminability and robustness. 
Extensive experiments on 3D action recognition (HDM05, FPHA), EEG classification (MAMEM-SSVEPII), and graph tasks demonstrate state-of-the-art performance. 
Our work not only advances geometric deep learning but also successfully adapts the proven multi-channel interaction philosophy of Euclidean networks to non-Euclidean domains, achieving superior discriminability and interpretability.
\end{abstract}

\begin{links}
    \link{Code}{https://github.com/Xua-Yu/GMSF-Net}
    \link{Extended version}{https://arxiv.org/abs/2511.08628}
\end{links}

\section{Introduction}

\label{introduction}
In recent years, unitary subspace modelling on the Grassmannian has achieved great success in tasks which require extended discriminative capacity, such as activity recognition~\cite{cherian2017generalized}, emotion recognition~\cite{liu2014combining,wang2023u}, face verification~\cite{huang2015projection,chen2021hybrid}, and classification of time-series data in brain-computer interfaces~\cite{gao2022domain}. 
In the above tasks, Grassmannian typically model the input data with a single and static orthogonal subspace~\cite{huang2018building,wang2024grassmannian,xu2023learning}. 
However, a fixed single-subspace assumption fails to capture local geometric variations and multi-modal distribution, limiting the expressiveness and adaptability of the model in structurally diverse or heterogeneous task scenarios~\cite{hamm2008grassmann}.
To overcome this issue, we introduce an adaptive multi-subspace representation that dynamically adjusts multiple subspaces to accurately capture task-specific characteristics and the harmonious distribution among subspaces.
The convergence of obtaining adaptive subspaces is analysed from a topological perspective~\cite{munkres2000topology}, satisfying specific demands across different tasks.

In deep neural networks, different vectors in the Euclidean space reflect their specific feature identities. 
However, the existing Euclidean metric~\cite{olver2006applied} driven by the inner product, measures the semantic difference between two vectors, resulting in a distorted relevance in task-specific decisions (as shown in Figure \ref{figure1} (a) and (d)). 
Based on non-Euclidean geometry, this distorted relevance can be effectively addressed on Riemannian manifolds~\cite{chen2023riemannian}. 
Intuitively, Riemannian manifolds provide compact spaces embedded in the dense Euclidean space, where the accessible path can preserve the geometric consistency~\cite{boumal2023introduction}. 
As a typical Riemannian manifold, the Grassmannian~\cite{bendokat2024grassmann} can model high-dimensional data as linear subspaces to effectively address semantic distortion. 
Despite this merit, existing solutions~\cite{huang2018building,wang2024grassmannian} are primarily driven by a single subspace, lacking consideration of the collaborative interaction of multi-subspaces, which hinders the characterisation of subspace geometry diversity in terms of extended semantics (as shown in Figure \ref{figure1} (b) and (e)). 
\begin{figure*}
\centering
\includegraphics[width=\linewidth]{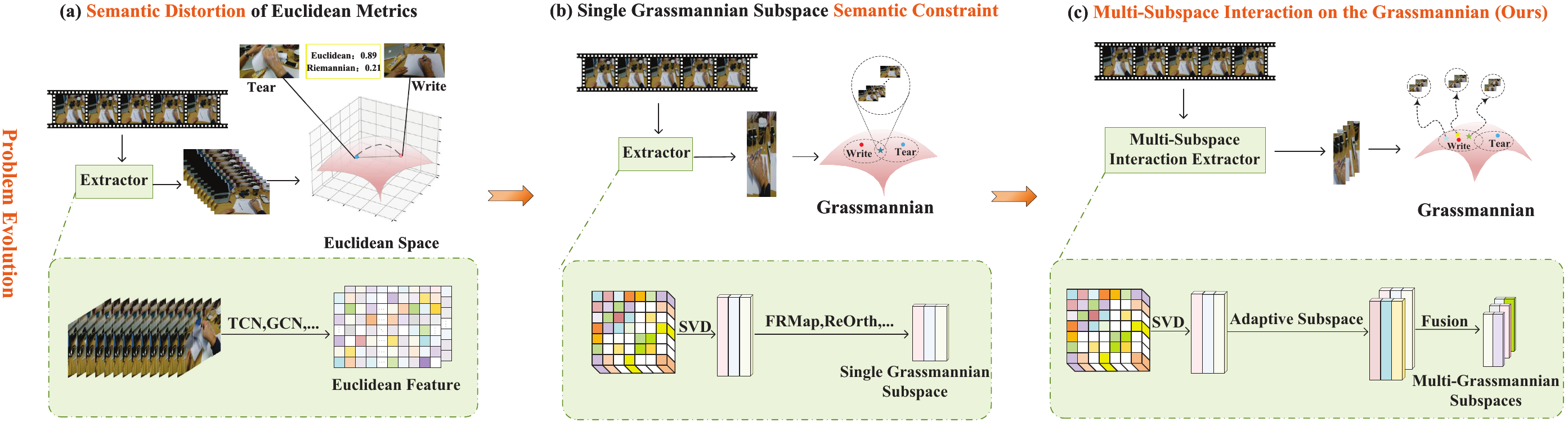}
\caption{Illustration of the problem evolution. (a) shows semantic distortion of features in Euclidean space. (b) reveals semantic constraint using a single Grassmannian subspace. (c) presents the proposed framework incorporating multi-Grassmannian subspace fusion.}
\label{figure1}
\end{figure*}
In the Euclidean space, the promising progress of deep learning in the recent decade is attributed to multi-channel interactions and non-linear activations~\cite{krizhevsky2012imagenet}. 
In particular, the seminal work LeNet-5~\cite{lecun2002gradient} extracts multi-channel feature maps using stacked convolution blocks, thus obtaining extended representative capacity via channel interaction. 
Admitting the natural representation power of linear subspace~\cite{xu2018non}, the potential of multi-subspace interactions has been overlooked in the community. 
To reveal the power of multi-subspace interaction for the Grassmannian, there are two major issues, \textit{i.e.}, \textit{how to perform subspace interaction and how to formulate its deep architecture.}

For the first issue, we design a multi-subspace interaction block in a geometric manner, \textit{i.e.}, several intermediate representations are generated by a unified mapping and then fused through alignment, facilitating effective interaction among different Grassmannian subspaces. 
For the second issue, we leverage an intrinsic geometric approach to ensure that the fusion process occurs at accurate positions on the manifold~\cite{tao2024alignmif}, which enables the design of our multi-subspace interaction block in a stackable form, allowing for flexible capacity expansion~\cite{song2025refinefuse}.
To the best of our knowledge, this is the first work that introduces deep interactions of Grassmannian subspaces in Riemannian neural networks, as shown in Figure \ref{figure1} (c) and (f), these deep interactions capture the dynamic relationships between multiple subspaces, effectively modelling their geometric structures, semantic features, and mutual influences.
Euclidean deep networks naturally implement backpropagation, given that the gradient superposition property satisfies the axioms of linear spaces~\cite{fei2025survey}. 
However, parameter optimisation on a Riemannian manifold must be strictly restricted to its tangent space ~\cite{smith2014optimization}. 
To this end, we develop a topology-driven framework for adaptive multi-subspace construction on the Grassmannian.
In summary, our main contributions are the following: 
\begin{itemize}
    \item We propose an adaptive multi-subspace modelling approach tailored to diverse recognition tasks.
    \item We propose a novel Grassmannian Multi-Subspace Fusion network (GMSF-Net) to fuse heterogeneous subspace representations.
    \item We propose a topology-driven framework with theoretical guarantees for adaptive Grassmannian subspaces construction.
\end{itemize}

\section{Related Work}
The Grassmannian is the set of all linear subspaces of a fixed dimension, showing notable algebraic and geometric merits, reflecting the stability and invariance of subspace geometric structures~\cite{souza2020interface}. In related research, deep learning applications on the Grassmannian have evolved from shallow methods to deep models.

In the early research stage, shallow Grassmannian learning~\cite{zhang2018grassmannian} is achieved by projecting subspaces onto the manifold, such as discriminant analysis~\cite{baudat2000generalized}, high-dimensional data clustering~\cite{von2007tutorial}, and low-rank matrix completion~\cite{dai2012geometric}. 
However, shallow learning on the manifold cannot mine deep geometric characteristics from the data, thus only suitable for simple tasks. 
The development of GrNet breaks this limitation by formulating a deep network version of the Grassmannian~\cite{huang2018building}. 
In particular, GrNet projects data onto the Grassmannian and follows the manifold's geometric properties to design network layers such as FRMap, OrthMap, and ProjMap. 
A brief summary of GrNet is provided in Appendix B. 
Typically, GrNet considers the manifold's constraints and the low-dimensional subspace representation of the data, thereby enhancing the model capacity to learn delicate geometric structures and effectively solve nonlinear relationships and high-dimensional problems. 

In recent studies, equipped with the self-attention mechanism, models can focus on the global relevance of the input and build long-range dependencies among them, greatly enhancing the model capacity~\cite{vaswani2017attention}. 
To this end, to explore the potential of self-attention on the Grassmannian, GDLNet~\cite{wang2024grassmannian} integrates the mechanism to build dependencies between different subspaces. 
It helps the model effectively focus more on important subspace information. 
However, GrNet and GDLNet typically adopt a single and static subspace to model the original global features, which makes it difficult to extract diverse geometric subspaces, thus delivering internal disturbances and weaker discriminative power. 
To address this issue, we introduce an adaptive multi-subspace construction mechanism to capture coherent geometric features. 
Additionally, we design a multi-subspace interaction block to fuse different subspaces, facilitating intra-subspace transformation and inter-subspace complement.

\section{Preliminaries}
This section provides a brief overview of the basic geometries of Grassmannian and Topology. 
More detailed review and relevant notations can be found in Appendix A.

\subsection{Grassmannian}
\textbf{Grassmannian geometries:} The Grassmannian \( \mathcal{G}(n, p) \) is the set of all \( p \)-dimensional linear subspaces of \( \mathbb{R}^n \). 
Each point represents a \( p \)-dimensional subspace, typically represented by an orthonormal basis matrix \( U \). 
The tangent space, exponential map, and logarithmic map are key tools to understand its geometric structure, and these concepts will serve as prerequisites for subsequent operations (a detailed explanation is provided in Appendix A.2).

\noindent \textbf{Grassmannian Metric:}
Points on the Grassmannian represent different subspaces, and their distances are typically measured by metrics such as the Projection metric~\cite{huang2015projection} or Principal Angles metric~\cite{qiu2005unitarily}. 
Considering computational efficiency, this paper adopts the Projection metrics to analyse points on the manifold. 
For two subspaces \(X_1\) and \(X_2\), the Projection metric is defined as:
\begin{equation}
\mathit{d_p}(\mathit{X}_1, \mathit{X}_2) = 2^{-1/2} \|\mathit{X}_1 \mathit{X}_1^T - \mathit{X}_2 \mathit{X}_2^T\|_F ,
\label{project}
\end{equation}
where \( \|\cdot\|_F \) is the Frobenius norm~\cite{harandi2013dictionary}. 
This metric serves as an effective tool for quantifying the distance between subspaces and enabling geometric analysis on the Grassmannian~\cite{fei2025survey}.

\subsection{Topology}
Topological spaces underpin Riemannian geometry by abstracting spatial properties beyond specific shapes~\cite{lee2000introduction}. 
On the Grassmannian, each subspace inherently constitutes a topological space. 
This topological framework rigorously characterises subspace structure and metric, ensuring the critical convergence property for sequences of subspaces. 
This theoretical guarantee of convergence motivates the design of adaptive multi-subspace modelling, ensuring stable convergence towards limiting subspaces during optimisation and enabling efficient geometric feature extraction and fusion. 
This section provides a brief introduction to the topological structure, with detailed definitions and proofs presented in Appendices A.3 and F.
\begin{figure*}
\centering
\includegraphics[width=\linewidth]{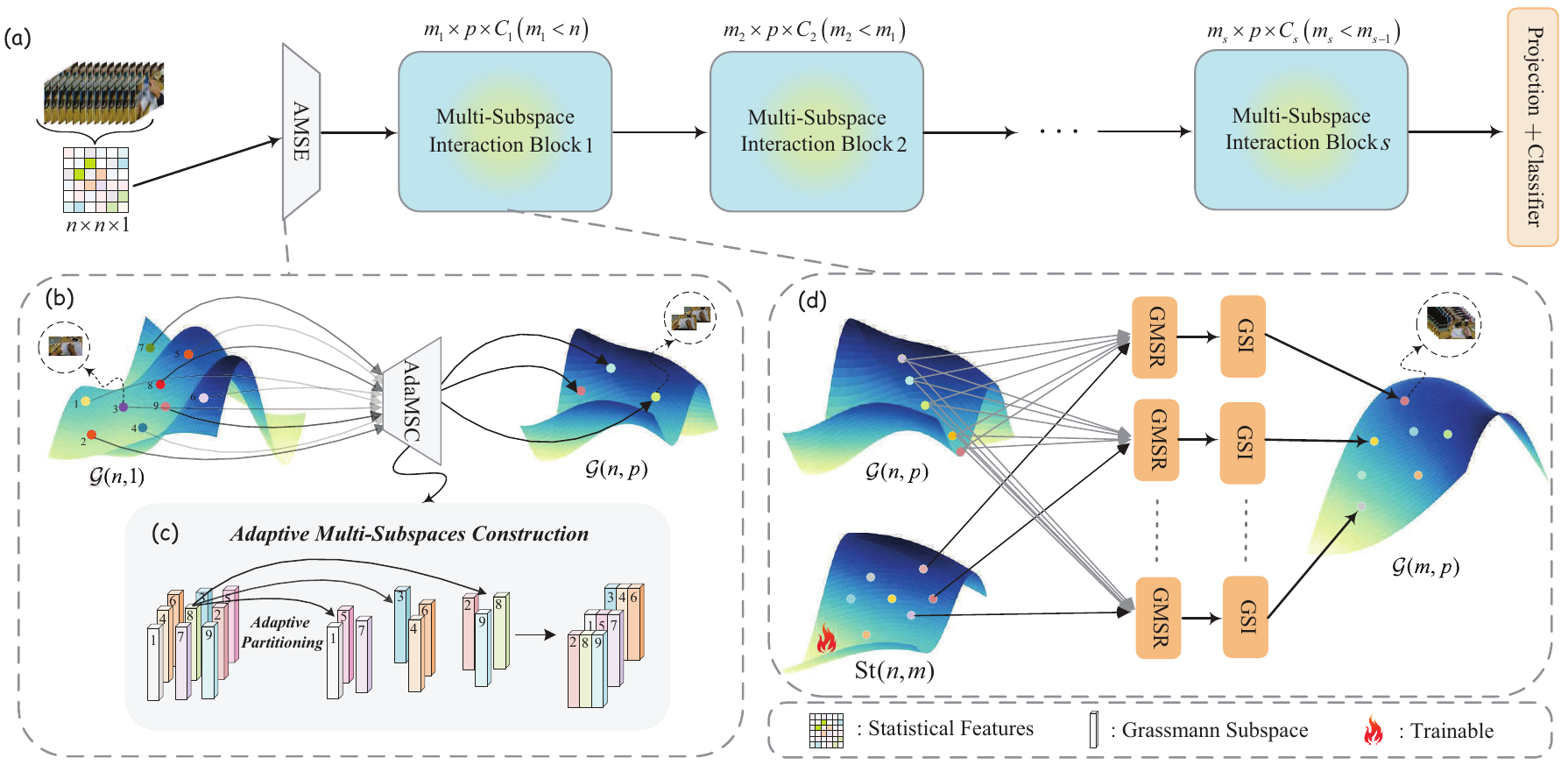}
\caption{\textit{Overview of GMSF-Net.} (a) Architecture of GMSF-Net. (b) Structure of the Adaptive Multi-Subspace Encoder (AMSE). (c) \textit{Fine-Grained Design.} The proposed efficient Adaptive Multi-Subspace Construction (AdaMSC). (d) \textit{Subspace Interaction Design.} The proposed discriminative Multi-Subspace Interaction Block.}
\label{figure2}
\end{figure*}

\section{Approach}
\label{Approach}
In this section, we will systematically introduce our method. 
We first introduce the adaptive multi-subspace encoder on the Grassmannian, analysing its design principles in depth and proving its convergence based on topological theory. 
Subsequently, we explain the design of the multi-subspace interaction block based on a geometric approach in detail. 
Figure \ref{figure2} (a) presents an overview of GMSF-Net.

\subsection{Adaptive Multi-Subspace Encoder (AMSE)}
This subsection introduces the Adaptive Multi-Subspace Encoder (AMSE) on the Grassmannian, which adaptively constructs multiple coherent subspaces to preserve the diverse geometric characteristics of data, as illustrated in Figure \ref{figure2} (b). 
This subsection introduces the construction of adaptive multiple subspaces and the analysis of their convergence in detail.

\subsubsection{Adaptive Multi-Subspace Construction (AdaMSC)}
\label{AdaMSC}
The construction of adaptive multi-subspaces is the core of AMSE, as shown in Figure \ref{figure2} (c), which builds multiple stable and harmonious low-dimensional subspace representations on the high-dimensional Grassmannian space \( \mathcal{G}(n,k)\), and adaptively adjusts these subspaces based on different tasks. 
In particular, we first extract frame-level features and compute a covariance matrix $X \in \mathbb{R}^{n \times n}$ to model the statistical dependencies of features along the temporal dimension~\cite{nguyen2019neural}. 
Subsequently, we apply the Schmidt orthogonalisation process to map $X$ into a set of low-dimensional orthogonal subspaces~\cite{yaghooti2024gram}:
\begin{equation}
\mathcal{S} = \{ S_1, S_2, \dots, S_k \}, \quad S_i^\top S_j = 0 \quad (\forall\, i \neq j),
\end{equation}
where each subspace \( S_j \in \mathcal{G}(n,1) \). 
This design aims to decouple statistical information from physical information in the features, transforming them into a set of localised geometric encodings that facilitate subsequent subspace modeling~\cite{edelman1998geometry}.

We define the index set of subspaces as \(\mathcal{I} = \{i_1, \dots, i_k\}\), where each index \(i_j\) corresponds to a subspace \(S_{j} \in \mathcal{G}(n,1)\). 
For each new subspace \(S'_{m'}\), we initialise a set of learnable weight parameters \(\mathcal{W}^{(m')} = \{w_1^{(m')}, \dots, w_k^{(m')}\}\), which are normalised via the Softmax function to obtain the weight distribution $\widetilde{\mathcal{W}}^{(m')} = \{\widetilde{w}_1^{(m')}, \ldots, \widetilde{w}_k^{(m')}\}$.  
Each $\widetilde{w}_j^{(m')}$ reflects the relative importance of the atomic subspace $S_j$ during the construction of $S'_{m}$. 
We then select the top $p$ indices with the highest values in $\widetilde{\mathcal{W}}^{(m')}$, obtaining the index set 
$\{i_{j_1}, i_{j_2}, \ldots, i_{j_p}\}$ and the corresponding weights 
$\{\widetilde{w}_{j_1}^{(m')}, \ldots, \widetilde{w}_{j_p}^{(m')}\}$. 
The \(m'\)-th new subspace is constructed from these weighted subspaces, with the construction defined as above based on Theorem F.1 in the Appendix:
\begin{equation}
S'_{m'} =[ \widetilde{w}_{j_1}^{(m')} S_{j_1},\ \widetilde{w}_{j_2}^{(m')} S_{j_2},\ \dots,\ \widetilde{w}_{j_p}^{(m')} S_{j_p}].
\end{equation}

In this way, the new subspace $S'_{m'}\in \mathcal{G}(n,p)$ is composed of distinct key atomics which enable efficient modelling and adaptive adjustment of diverse features and lay a solid foundation for subsequent multi-subspace fusion. 
The detailed process is shown in~\cref{alg:weighted}.

\begin{algorithm}[t]
   \caption{Adaptive Multi-Subspace Construction}
   \label{alg:weighted}
\begin{algorithmic}[1]
   \STATE \textbf{Input:}
      Low-dimensional subspaces $\{ S_i \}_{i=1}^k$, $S_i \in \mathcal{G}(n,1)$,
      subspace dimension $p$, number of output subspaces $m'$
   \STATE \textbf{Parameter:} Importance matrix $\mathcal{W} \in \mathbb{R}^{m' \times k}$
   \STATE \textbf{Output:} Weighted multi-subspaces $\{ S'_i \}_{i=1}^{m'}$
   
   \STATE $\mathcal{W'} \leftarrow \text{Softmax}(\mathcal{W}, \text{dim}=1)$ \textit{\# row-wise softmax}
   \FOR{$i = 1$ \textbf{to} $m'$}
     \STATE Sort indices: $I_i \gets argsort_{descending}(\mathcal{W'}[i,:])$
     \STATE Select top-$p$ components: $\widetilde{S}_i \gets S_i[:, I_i[1:p]]$ 
     \STATE Get corresponding weights: $\mathbf{w}_i \gets \mathcal{W'}[i, I_i[1:p]]$
     \STATE $T_i \gets \widetilde{S}_i \odot (\mathbf{1}_n \cdot \mathbf{w}_i^\top)$ \COMMENT{$\mathbf{1}_n$: all-ones vector}
   \ENDFOR
   \STATE Concatenate subspaces: $\{ S'_i \}_{i=1}^{m'} \gets \bigcup_{k=1}^{m'} \{ T_k \}$ 
\end{algorithmic}
\end{algorithm}

Next, we introduce the mutual information function \( f(S') \) to measure the complementarity between different new subspaces \( S_i' \) and \( S_j' \)~\cite{liu2009feature}. This function aims to enhance the discriminative ability of the overall representation by maximising the information complementarity between subspaces:
\begin{equation}
f(S') = I(S_i', S_j'), \quad i \neq j, \quad i,j \in \{1, \dots, m'\}.
\end{equation}

The specific formula for the mutual information \( I(S_i', S_j') \) is:
\begin{equation}
I(S_i', S_j') = \int \int p(S_i', S_j') \log \frac{p(S_i', S_j')}{p(S_i') p(S_j')} \, dS_i' \, dS_j'.
\end{equation}

In practical modelling, since it is difficult to directly obtain the probability distributions \( p(S_i') \) and \( p(S_j') \) for the subspaces, we typically use embedding or feature-based estimation methods to approximate the mutual information (for detailed information see the \textit{Optimisation Strategies})~\cite{maeda2025estimation}.

\subsubsection{Analysis of Adaptive Subspace Convergence}
We further analyse multi-subspace construction on the Grassmannian from a topological perspective to ensure model convergence. By introducing a metric topology and projection metric, we ensure the subspaces converge iteratively. In the metric topology, the distance on the Grassmannian is defined by the projection metric, as shown in Equation (1). This metric satisfies non-negativity, symmetry, and the triangle inequality, ensuring that the Grassmannian is a metric topology. The proof is in Appendix F.2.

In this metric topology, the convergence of a sequence of subspaces is naturally defined as detailed in Proposition F.3 in the Appendix. During training~\cite{ionescu2015training}, the subspace is iteratively updated through Riemannian gradient descent, with the update rule as follows:
\begin{equation}
S'(t+1) = \text{Exp}_{S'(t)}\left(-\eta \, \text{Proj}_{T_{S'(t)}} \nabla L(S')\right),
\end{equation}
where \(\text{Exp}_{S'(t)}\) is the exponential map, \(\text{Proj}_{T_{S'(t)}}\) is the tangent space projection, \(\nabla L(S')\) is the gradient,and the step size \(\eta\) controls the learning rate. The proof is in Appendices F.3 to F.5. Due to the continuity and differentiability of the loss function with a lower bound, along with the completeness of the manifold and the continuity of the exponential map~\cite{eschenauer2001topology}, this subspace eventually satisfies (see Appendix F.6):
\begin{equation}
d(S'(t), S^*) \to 0 \quad \text{as} \quad t \to \infty .
\end{equation}

This means that in the topology induced by the projection metric, the sequence converges to the stable subspace $S^*$, ensuring the convergence and consistency of the model's geometric modelling process.

\subsection{Multi-Subspace Interaction Block}
\label{Multi-Subspace}
The multi-subspace interaction block fuses multiple Grassmannian subspaces into a single integrated subspace, as shown in Figure \ref{figure2}, effectively capturing geometric features of data and enhancing the ability to represent complex geometric cues. 
This block consists of two main components: Grassmannian Multi-Representations and Subspace Interaction.

This work introduces a semantic perspective for multi-subspace interaction design, addressing the often-overlooked intuition behind such interactions. 
We posit that distinct subspaces, governed by different geometric frameworks, capture complementary task-relevant information through diverse local geometric features.
Their interaction transcends mere feature fusion, embodying semantic co-aggregation within the manifold. 
By incorporating alignment constraints and weighted aggregation, our model dynamically integrates multi-subspace representations, enhancing both discriminability and geometric diversity.

\subsubsection{Grassmannian Multi-Subspace Representations (GMSR)}
\label{GMR}
Grassmannian Multi-Subspace Representations (GMSR) generates subspace representations with distinct geometric features under a unified geometric framework by applying the same learnable matrix to a set of input subspaces $\{S'_{m'}\}$, producing a set of output representations $\{X_{GMSR}^{c,m'}\}$. 
The formulation is as follows:
\begin{equation}
X_{GMSR}^{c,m'} = f_{GMSR}^{(c)}(S'_{m'}, W_c) = W_c^{T} S'_{m'},
\end{equation}
where \(S'_{m'} \in \mathcal{G}(n, p)\) is the $m$-th output subspace from the AMSE module, \( W_{c} \in \mathbb{R}^{n \times m}, (m < n) \) is the mapping matrix that is basically required to be a full rank matrix~\cite{huang2017riemannian}, \( X_{GMSR}^{c,m'} \in \mathcal{G}(m, p) \) is the new Grassmannian representations. 
Generally, \( W_{c}^{T} S' \) is not an orthonormal basis matrix,  we employ QR or SVD decomposition to preserve the intrinsic geometric structure of the data.  
To obtain a closed-form expression for the mapping matrix \( W_{c}\) during the optimisation process, we constrain it to the Stiefel manifold \( St(n,m)\)~\cite{huang2017riemannian}. We define \( c \) geometric frameworks as mapping matrices \(\left\{ W_{1}, \ldots, W_{c}\right\}\) capturing different subspace representations, where \( c \) denotes the number of geometric frameworks. The learnability of the mapping matrix enables the model to adaptively adjust the geometric framework, capture the geometric information of multiple subspaces, and improve modelling capacity for geometric feature extraction and subspace fusion.

\subsubsection{Grassmannian Subspace Interaction (GSI)}
In the GMSR layer, we construct subspace representations under various geometric frameworks $c$. 
The input is a set of $m'$ Grassmannian subspace points denoted as \(\{P_i\}_{i=1}^{m'}\), where each \(P_i = X_{GMSR}^{c,i} \in \mathcal{G}(m, p)\).
To enable effective fusion of the set \(\{P_i\}_{i=1}^{m'}\), we introduce the Fréchet mean to preserve the intrinsic geometric structure~\cite{pennec2006riemannian}. 
Specifically, Fréchet mean $\Gamma$ is computed as the point that minimises the sum of squared distances on the manifold~\cite{yang2010riemannian}:
\begin{equation}
\Gamma\!=\!\mathcal{F}_w(\{P_{i}\}_{\!i\leq m'})\!:=\!\textit{arg}\min_{\!\Gamma\in\mathcal{G}(m,p)}\sum_{i=1}^{m'}\!w_i\delta_R^2(\Gamma,P_{i}),
\end{equation}
where, $w_i$ denotes the weight of each point, satisfying $w_i \geq 0$ and $\sum_{i=1}^{m'} w_i = 1$. 
The function $\delta_R(\cdot, \cdot)$ represents the geodesic distance on the Grassmannian.

When \( m' = 2 \), \textit{i.e.}, the weights are given by \( \{w, 1 - w\} \), a closed-form solution exists, which exactly corresponds to the geodesic between the two points \( P_1 \) and \( P_2 \), parameterised by \( w \in [0, 1] \):
\begin{equation}
\Gamma = V \left[ P_1 \cos(w \Theta) + U \sin(w \Theta) \right] V^\top,
\end{equation}
where \( P_1^\top P_2 = V \cos(\Theta) Q^\top \) is part of its singular value decomposition (SVD), \( U, \Theta, V \) come from the SVD of \( (I - P_1P_2^\top) P_2 V \), and the parameter \( w \in [0, 1] \) controls the position of the fused point~\cite{beik2021learning}.

When \( m' > 2 \), the optimisation problem does not have a closed-form solution. 
Therefore, the Karcher flow algorithm is employed to perform optimisation in the tangent space iteratively, ultimately converging to the Fréchet mean through exponential mapping. 
The update formula of the Karcher flow~\cite{karcher1977riemannian,yang2010riemannian} algorithm is as follows (Appendix C provides detailed steps):
\begin{equation}
\Gamma_{new} = \exp_{\Gamma}\left(\Gamma, \alpha \cdot \frac{1}{m'} \sum_{i=1}^{m'} \log_{\Gamma}(\Gamma, P_i) \right),
\end{equation}
where \( \alpha \) is the step size, \( \Gamma \) is the current mean, and \( P_i \) represents each subspace within the same geometric framework. 
\( \log_G(\cdot, \cdot) \) and \( \exp_G(\cdot, \cdot) \) are provided in the Appendix A.
\subsection{Deep Subspace Interaction and Optimisation Principles}
\label{Optimization}
\textbf{Deep Subspace Interaction Principles:}
We propose a stackable GMSF-Net by applying multiple blocks to meet the flexible and diverse requirements of practical applications. 
The core principle is to constrain the feature manifold distribution through Riemannian batch normalisation, thereby enhancing geometric discriminability while stabilising the feature space topology. 
The feature construction based on Riemannian BN provides navigation signals with geometric interpretability, enabling the stacking structure to systematically enhance and refine the features.

For the input of the AMSE block, \textit{i.e.} statistical features, we introduce a Riemannian normalisation strategy~\cite{brooks2019riemannian} based on manifold geometry, mapping statistical features to the SPD manifold~\cite{nguyen2019neural} and normalising them
to extract more discriminative key Riemannian features (see Appendix E for the detailed implementation of SPDBN):
\begin{equation}
\hat{X} = \exp_{\hat{B}} \left( \hat{B}^{1/2} \left( (\hat{B}\!B)^{-1/2} X (\hat{B}\!B)^{-1/2} \right) \hat{B}^{1/2} \right),
\end{equation}
where \( \hat{X} \) is the refined feature, \( X \) is the original input, \( B \) is the Riemannian barycenter, and \( \hat{B} \) is the learnable common feature, which is iteratively updated using the Riemannian geodesic formula~\cite{yair2019parallel}.

\textbf{Optimisation Strategies:} 
In order to effectively train the proposed GMSF-Net, we restructure the loss function by combining classification cross-entropy loss~\cite{hinton1995wake} and MI-inspired regularizer loss, effectively capturing diverse geometric subspaces. 
The total loss function for the proposed method is as follows:
\begin{equation}
\mathcal{L}_{total} = \mathcal{L}_{CE} + \lambda \cdot \mathcal{L}_{C} ,
\end{equation}
where, \(\mathcal{L}_{total}\) is the total loss, \(\mathcal{L}_{CE}\) is the cross-entropy loss, \(\mathcal{L}_{C}\) is the MI-inspired regularizer loss, and \(\lambda\) is a hyperparameter that balances classification accuracy with structural preservation. 
The details of these two loss functions (Appendix D.1) and their backpropagation derivations (Appendix D.2) are provided in Appendix D.

\section{Evaluation}
\label{Experiment}
In this section, we evaluate the performance of the proposed GMSF-Net on two challenging classification tasks:  HDM05~\cite{muller2007mocap} and FPHA~\cite{garcia2018first} datasets for video-based 3D action recognition, and the MAMEM-SSVEP-II~\cite{pan2022matt} dataset for EEG signal classification. 
Additionally, we perform node classification (NC) and link prediction (LP) tasks on graph datasets, where the underlying structure of the data can naturally be represented as Grassmannian, possessing multi-subspace geometric properties. 
More experimental details are presented in Appendix G.

\begin{table}[t]
  \centering
  \small
  \setlength{\tabcolsep}{4pt}
  \begin{tabular}{llcc}
    \toprule
    Method     &Acc.(\%)     &Size($MB$)     &FLOPs($M$)  \\
    \midrule
    SPDNet & 87.65\%±1.02  & 13.60     & 1595.50 \\
    GrNet  & 78.79\%±1.82 & 6.73   & 38.60    \\
    MATT     & 87.70\%±0.68       & 1.83   & 142.07\\
    SPDNetBN     & 89.33\%±0.49       & 13.63  & 1902.97 \\
    GDLNet     & 87.60\%±0.69       & 1.83   & \textbf{33.69}\\
    \midrule
    GMSF-Net-1Block     & 90.43\%±0.74       & \textbf{1.20}  & 48.42 \\
    GMSF-Net-2Blocks     & 90.70\%±0.70       & 1.26  & 63.44 \\
    GMSF-Net-3Blocks     & \textbf{91.22\%±0.53}      & 1.30  & 81.07 \\
    \bottomrule
  \end{tabular}
  \caption{Comparison Between GMSF-Net and Other Riemannian Solutions on the FPHA Dataset.}
  \label{FPHA} 
\end{table}
\begin{table}[t]
  \centering
  \small
  \setlength{\tabcolsep}{4pt}
  \begin{tabular}{llcc}
    \toprule
    Method     &Acc.(\%)     &Size($MB$)     &FLOPs($M$)  \\
    \midrule
    SPDNet & 60.45\%±1.12  & 11.71     & 2050.49\\
    GrNet  & 59.23\%±1.78 & 6.88   & \textbf{81.96}   \\
    MATT     & 62.25\%±1.68       & 3.99 & 442.45 \\
    GDLNet     & 60.08\%±1.78       & 3.95  & 89.11\\
    \midrule
    GMSF-Net-1Block     & 63.64\%±1.24       & \textbf{3.33} & 96.08 \\
    GMSF-Net-2Blocks     & 63.98\%±1.07       & 3.34  & 100.07\\
    GMSF-Net-3Blocks     & \textbf{64.19\%±0.88}       & 3.39 & 114.12 \\
    \bottomrule
  \end{tabular}
  \caption{Comparison Between GMSF-Net and Other Riemannian Solutions on the HDM05 Dataset.}
  \label{HDM05}
\end{table}

\begin{table}[t]
  \centering
  \small
  \setlength{\tabcolsep}{3pt}
  \begin{tabular}{lccc}
    \toprule
    Method & Acc.(\%) & Size($MB$) & FLOPs ($M$) \\
    \midrule
    EEGNet           & 53.72\%±7.23 & \textbf{0.075} & 60.13 \\
    ShallowConvNet   & 56.93\%±6.97 & 0.18  & 127.00 \\
    EEG-TCNet       & 55.45\%±7.66 & 0.016 & 60.80 \\
    SCCNet              & 62.11\%±7.70 & 0.55  & 108.56 \\
    MBEEGSE        & 56.45\%±7.27 & 1.64  & 52.40 \\
    FBCNet             & 53.09\%±5.67 & 0.24  & \textbf{51.80} \\
    \midrule  
    SPDNet                                     & 62.30\%±3.12 & 2.81  & 2274.56 \\
    GrNet                                      & 61.23\%±3.56 & 1.95  & 2020.78 \\
    MATT                   & 65.19\%±3.14 & 1.97  & 2068.44 \\
    SPDNetBN                                   & 62.76\%±3.01 & 2.81  & 2312.78 \\
    GDLNet                                     & 65.52\%±2.86 & 1.95  & 2028.34 \\
    \midrule
    GMSF-Net-1Block                            & 66.74\%±1.79 & 1.94  & 2020.14 \\
    GMSF-Net-2Blocks                           & 66.32\%±1.84 & 1.94  & 2021.78 \\
    GMSF-Net-3Blocks                           & \textbf{66.87\%±1.46} & 1.94 & 2032.55 \\
    \bottomrule
  \end{tabular}
  \caption{Performance comparison between GMSF-Net and the baseline on SSVEP.}
  \label{SSVEP}
\end{table}

\begin{table*}[ht]
  \centering
  \small
  \setlength{\tabcolsep}{5pt}
  \begin{tabular}{clcccccccc}
    \toprule
    \multicolumn{2}{c}{\textbf{Dataset}} & \multicolumn{2}{c}{\textbf{Disease}} & \multicolumn{2}{c}{\textbf{Airport}} & \multicolumn{2}{c}{\textbf{PubMed}} & \multicolumn{2}{c}{\textbf{CoRA}} \\
    \cmidrule(lr){1-2} \cmidrule(lr){3-4} \cmidrule(lr){5-6} \cmidrule(lr){7-8} \cmidrule(lr){9-10} 
    & \textbf{Task} & \textbf{LP} & \textbf{NC} & \textbf{LP} & \textbf{NC} & \textbf{LP} & \textbf{NC} & \textbf{LP} & \textbf{NC} \\
    \midrule
    
    & Euc          & 59.8\%±2.0 & 32.5\%±1.1 & 92.0\%±0.0 & 60.9\%±3.4 & 83.3\%±0.1 & 48.2\%±0.7 & 82.5\%±0.3 & 23.8\%±0.7 \\
    & Hyp    & 63.5\%±0.6 & 45.5\%±3.3 & 94.5\%±0.0 & 70.2\%±0.1 & 87.5\%±0.1 & 68.5\%±0.3 & 87.6\%±0.2 & 22.0\%±1.5 \\
    & Euc-Mixed    & 49.6\%±1.1 & 35.2\%±3.4 & 91.5\%±0.1 & 68.3\%±2.3 & 86.0\%±1.3 & 63.0\%±0.3 & 84.4\%±0.2 & 46.1\%±0.4 \\
    & Hyp-Mixed    & 55.1\%±1.3 & 56.9\%±1.5 & 93.3\%±0.0 & 69.6\%±0.1 & 83.8\%±0.3 & \textbf{73.9\%}±0.2 & 85.6\%±0.5 & 45.9\%±0.3 \\
    & MLP          & 72.6\%±0.6 & 28.8\%±2.5 & 89.8\%±0.5 & 68.6\%±0.6 & 84.1\%±0.9 & 72.4\%±0.2 & 83.1\%±0.5 & 51.5\%±1.0 \\
    & HNN   & 75.1\%±0.3 & 41.0\%±1.8 & 90.8\%±0.2 & 80.5\%±0.5 & 94.9\%±0.1 & 69.8\%±0.4 & 89.0\%±0.1 & 54.6\%±0.4 \\
    \midrule
    & GMSF-Net-1Block & 95.0\%±1.0 & 79.7\%±2.9 & \textbf{93.4\%}±0.2 & \textbf{82.5\%}±0.8 & 94.6\%±0.1 & 72.7\%±0.5 & 88.6\%±0.6 & \textbf{57.8\%}±1.2 \\
    & GMSF-Net-2Blocks & 94.6\%±0.7 & 81.1\%±1.9 & 92.7\%±0.6 & 80.7\%±1.0 & \textbf{95.0\%}±0.1 & 72.8\%±0.4 & \textbf{89.3\%}±0.6 & 55.8\%±1.5 \\
    & GMSF-Net-3Blocks & \textbf{95.5\%}±0.4 & \textbf{82.7\%}±0.8 & 91.2\%±0.2 & 78.3\%±0.6 & 93.6\%±0.3 & 72.9\%±0.3 & 88.9\%±0.3 & 56.2\%±1.1 \\
    \bottomrule
  \end{tabular}
  \caption{Performance comparison between GMSF-Net and the baselines on Graph Tasks.}
  \label{tab:Graph}
\end{table*}

\begin{table*}[ht]
  \centering
  \small
  \setlength{\tabcolsep}{5pt}
  \begin{tabular}{lcccccc}
    \toprule
    \multicolumn{1}{c}{} & \multicolumn{3}{c}{\textbf{With Interaction}} & \multicolumn{3}{c}{\textbf{Without Interaction}} \\
    \cmidrule(lr){2-4} \cmidrule(lr){5-7}  
    \textbf{Configuration} & \textbf{HDM05} & \textbf{FPHA} & \textbf{SSVEP} & \textbf{HDM05} & \textbf{FPHA} & \textbf{SSVEP} \\
    \midrule 
    Adaptive-Subspace   & \textbf{63.64\%}±1.24 & \textbf{90.43\%}±0.74 & \textbf{66.74\%}±1.79  & \textbf{56.49\%}±1.37 & \textbf{80.68\%}±2.15 & \textbf{59.83\%}±2.72 \\
    Random-Subspace     & 50.29\%±2.10 & 72.47\%±3.19 & 56.01\%±2.46 & 52.50\%±1.82 & 73.10\%±2.19 & 57.43\%±1.10 \\
    Fixed-Subspace      & 53.04\%±0.88 & 83.06\%±0.61 & 66.05\%±1.95 & 51.12\%±3.55 & 76.67\%±0.16 & 58.85\%±1.19  \\
    \bottomrule
  \end{tabular}
   \caption{Performance Comparison of Subspace Construction and Interaction Mechanisms.}
   \label{tab:adaptive}
\end{table*}

\subsection{Experimental Evaluation on the Classification Datasets}
We evaluate the proposed GMSF-Net on video-based 3D action recognition and EEG signal classification tasks. 
For action recognition, we use the FPHA and HDM05 datasets, following the experimental settings of SymNet~\cite{wang2021symnet} and GrNet~\cite{huang2018building}. 
For EEG classification, we follow the preprocessing pipeline of GDLNet~\cite{wang2024grassmannian} and conduct experiments on the MAMEM-SSVEP-II dataset.
In the model evaluation, we compare performance involving different numbers of multi-subspace interaction blocks. 
As performance saturates with more blocks, we report results with up to three blocks in this section. 
For each task, we highlight the best performance results in bold.

\textbf{FPHA:} Following~\cite{wang2021symnet}, we evaluate GMSF-Net using three network configurations: 1Block, 2Blocks, and 3Blocks. 
The results from 10 random experiments (mean ± std) are shown in  Table \ref{FPHA}. 
Notably, GMSF-Net consistently outperforms most Riemannian deep models, with a \textbf{12.43\%} improvement over GrNet. 
As the number of blocks increases, performance improvements plateau, yet the effectiveness of the adaptive subspace interaction method remains evident. 
Additionally, our model demonstrates significant advantages in terms of model size and computational cost.

\textbf{HDM05:} Following~\cite{huang2018building}, we also adopt three architectures to evaluate GMSF-Net: 1Block, 2Blocks, and 3Blocks configurations. 
The 10-fold results (mean ± std) are presented in Table \ref{HDM05}. 
It should be noted that GMSF-Net achieves a performance improvement of up to \textbf{4.96\%} over GrNet. 
This highlights the effectiveness of our approach and significantly reduces memory and computational costs compared to SPD manifold-based neural networks. 
Results on HDM05 and FPHA demonstrate that GMSF-Net consistently outperforms other methods in 3D action recognition.

\textbf{MAMEM-SSVEP-II:} In Table \ref{SSVEP}, we follow the protocol of GDLNet and report the results of ten random cross-validation runs (mean ± std). 
It can be observed that GMSF-Net outperforms existing Riemannian models as well as EEG deep learning models, including EEGNet~\cite{lawhern2018eegnet}, ShallowConvNet~\cite{schirrmeister2017deep}, EEG-TCNet~\cite{ingolfsson2020eeg}, SCCNet~\cite{wei2019spatial}, MBEEGSE~\cite{altuwaijri2022multi}, and FBCNet~\cite{mane2021fbcnet}, in terms of overall performance. 
The accuracy is improved by \textbf{5.64\%} over GrNet, and by \textbf{1.68\%} and \textbf{1.35\%} over the Riemannian self-attention models MATT~\cite{pan2022matt} and GDLNet, with model stability also enhanced. EEG visualisation results are shown in Appendix G.4.
These results further highlight the versatility and effectiveness of our framework.
\subsection{Experimental Evaluation on Graph Datasets}
To evaluate the broad applicability of GMSF-Net, we perform node classification (NC) and link prediction (LP) on graph datasets with prominent Grassmannian subspace structures.
The implementation details is provided in Appendix G.2.2.
The experimental results show that increased deviation from the ideal Grassmannian structure leads to performance degradation due to modelling limits on non-natural manifold geometries. 
In contrast, GMSF-Net exhibits greater performance gains on datasets more aligned with the Grassmannian assumption, validating the soundness and adaptability of its design. 
Specifically, GMSF-Net achieves notable gains on Cora~\cite{sen2008collective} and Disease~\cite{chami2019hyperbolic} datasets, which exhibit clear subspace structures, while improvements are more modest on structurally complex PubMed~\cite{sen2008collective} and Airport~\cite{chami2019hyperbolic} datasets that deviate from ideal subspace assumptions. 
GMSF-Net achieves notable gains on all graph datasets except PubMed, as shown in Table \ref{tab:Graph}.
\subsection{Ablation Experiments and Analysis}
As shown in Table \ref{tab:adaptive}, we conducted six ablation studies across three datasets to evaluate the effectiveness of the adaptive subspace and interaction mechanisms. 
The results show that the framework combining adaptive subspace construction with the interaction mechanism consistently achieves the best performance (HDM05: 63.64\%, FPHA: 90.43\%, SSVEP: 66.74\%), significantly outperforming other configurations.
This validates the necessity of adaptive mechanisms for optimising the feature space and highlights the synergistic enhancement provided by the interaction mechanism within high-confidence subspaces. 
As shown in Table 5, the random subspace achieves the lowest performance (\textit{e.g.}, HDM05: 52.50\%~$\rightarrow$~50.29\%), indicating that low-quality interactions may introduce noise. 
While the fixed subspace reaches 83.06\% on FPHA, its static structure limits its ability to adapt to diverse geometric characteristics. 
The adaptive subspace is essential for effectively collaborating with the interaction mechanism to capture feature correlations and enhance generalisation performance. 
Additional experiments on model hyperparameters are provided in Appendix G.3.

\section{Conclusion}
\label{Conclusion}
This study innovatively proposes a topology-driven multi-subspace fusion framework on the Grassmannian, unifying adaptive geometric representation learning with differentiable subspace interaction. 
By dynamically optimising subspaces to generate task-oriented geometric representation primitives and incorporating a geometry-aware interaction network for subspace topology correlation modelling, we successfully achieve simultaneous improvements in model accuracy and noise robustness in cross-dataset experiments. 
This work provides a new differentiable manifold learning paradigm for fields such as computer vision and natural language processing.

\section{Acknowledgement}
This work was supported in part by the National Natural Science Foundation of China (62576152, 62336004, 62020106012), the Basic Research Program of Jiangsu (BK20250104), and the Fundamental Research Funds for the Central Universities (JUSRP202504007)

\bibliography{aaai2026}

\section*{Appendix}


\section{A Preliminaries}
\label{Preliminaries}
\subsection{A.1 Brief Introduction to Riemannian Geometry}
Intuitively, manifolds can be viewed as local extensions of Euclidean spaces. In these geometric structures, differentials serve as a natural generalization of derivatives in classical calculus. When a manifold is endowed with a Riemannian metric, where each tangent space at a point is equipped with an inner product, it becomes a Riemannian manifold. For more details on smooth manifolds, please refer to~\cite{boumal2023introduction}.
\begin{definition}[Riemannian Manifold]
A Riemannian metric on a manifold \(M\) is a smooth, symmetric, covariant 2-tensor field on \(M\) that is positive definite at every point. A Riemannian manifold is a pair \(\{M, g\}\), where \(M\) is a smooth manifold and \(g\) is a Riemannian metric.
\end{definition}
For simplicity, we abbreviate \(\{M, g\}\) as \(M\). The Riemannian metric \(g\) induces various Riemannian operators, including geodesics, exponential maps, logarithmic maps, and parallel transport. These operators correspond to straight lines, vector addition, vector subtraction, and parallel displacement in Euclidean spaces, respectively. Detailed discussions on Riemannian geometry can be found in~\cite{boumal2023introduction}.

Additionally, we need to understand the concept of pullback metrics, an important technique in Riemannian geometry, which is a natural extension of the bijection concept from set theory.

\begin{definition}[Pullback Metrics]
Let \(M\) and \(N\) be smooth manifolds, \(g:\) a Riemannian metric on \(N\), and \(f: M \rightarrow N\) a smooth map. The pullback of \(g\) by \(f\) is defined pointwise as follows:
\begin{equation}
(f^* g)_p(V_1, V_2) = g_{f(p)}(f^*_p(V_1), f^*_p(V_2)),
\end{equation}
where \(p \in M\), \(f^*_p(\cdot)\) is the differential map of \(f\) at \(p\), and \(V_i \in T_p M\). If \(f^*g\) is positive definite, it is a Riemannian metric on \(M\), called the pullback metric defined by \(f\).
\end{definition}
\subsection{A.2 Basic geometries of Grassmannian manifolds}
\label{geometries}
The Grassmannian manifold \( \mathcal{G}(n, p) \) is the set of all \( p \)-dimensional linear subspaces of \( \mathbb{R}^n \). It is a Riemannian manifold with \( p(n - p) \) dimensions. Each point on \( \mathcal{G}(n, p) \) represents a \( p \)-dimensional subspace, which can be represented by an orthonormal basis matrix \( U \):
\begin{equation}
\mathcal{G}(n, p) := \left\{ U \in \mathbb{R}^{n \times p} \mid U^T U = I_p \right\},
\end{equation}
where \( U^T \) denotes the transpose of \( U \) and \( I_p \) is the identity matrix of size \( p \times p \). The columns of the matrix \( p \) are orthogonal~\cite{bendokat2024grassmann}.

\textbf{Tangent Space of the Grassmannian Manifold:}
To study differential geometry on the Grassmannian manifold, we need to define the tangent space. The tangent space characterizes the "possible directions" near a given point, that is, the directions along which one can "move" starting from that point.
\begin{definition}[\textbf{Tangent Space in the Orthonormal Basis View}]  
At the point $\boldsymbol{U} \in \mathcal{G}(n, p)$, the \textbf{tangent space} $T_{\boldsymbol{U}}\mathcal{G}(n, p)$ is defined as:
\begin{equation}
T_{\boldsymbol{U}}\mathcal{G}(n, p) = \left \{ \boldsymbol{\Delta} \in \mathbb{R}^{n \times p} \ \middle| \ \boldsymbol{U}^\top \boldsymbol{\Delta} = \boldsymbol{0} \right \}.
\end{equation}
\end{definition}

\textbf{Exponential Map and Log Map of the Grassmannian Manifold:} 
On the Grassmannian manifold, the exponential map and the logarithmic map are important tools for connecting the tangent space and the manifold. The exponential map maps vectors from the tangent space to points on the manifold, while the logarithmic map is its inverse operation.
\begin{definition}[\textbf{Exponential Map in the Orthogonal Basis Viewpoint}]
The exponential map $\operatorname{Exp}_{\boldsymbol{U}}$ maps a tangent vector $\boldsymbol{\Delta} \in T_{\boldsymbol{U}}\mathcal{G}(n, p)$ to a point on the manifold, defined as:
\begin{equation}
\operatorname{Exp}_{\boldsymbol{U}}(\boldsymbol{\Delta}) = \boldsymbol{U} \boldsymbol{R} \cos(\boldsymbol{\Sigma}) + \boldsymbol{Q} \sin(\boldsymbol{\Sigma}),
\end{equation}
where, $\boldsymbol{\Delta} = \boldsymbol{Q} \boldsymbol{\Sigma} \boldsymbol{R}^\top$ is the singular value decomposition of the tangent vector.
\end{definition}

\begin{definition}[\textbf{Logarithmic Map in the Orthogonal Basis Viewpoint}]
For $\boldsymbol{U}, \boldsymbol{Y} \in \mathcal{G}(n, p)$, the Riemannian logarithmic map $\operatorname{Log}_{\boldsymbol{U}}(\boldsymbol{Y})$ is defined as:
\begin{equation}
\operatorname{Log}_{\boldsymbol{U}}(\boldsymbol{Y}) = \boldsymbol{O} \arccos(\boldsymbol{\Sigma}) \boldsymbol{R}^\top,
\end{equation}
where
\begin{equation}
\left( \boldsymbol{I}_n - \boldsymbol{U}\boldsymbol{U}^\top \right) \boldsymbol{Y} \left( \boldsymbol{U}^\top \boldsymbol{Y} \right)^{-1} = \boldsymbol{O} \boldsymbol{\Sigma} \boldsymbol{R}^\top
\end{equation}
is the singular value decomposition of the matrix.
\end{definition}

\subsection{A.3 Basic Concepts in Topology}
\label{Topology}
\begin{definition}[\textbf{Topology Space Definition}]
A set \( X \) and a collection of subsets \( \mathcal{T} \) are called a topological space if they satisfy the following conditions: 

\begin{itemize}
    \item The empty set and the set \( X \) are in \( \mathcal{T} \), \textit{i.e.}, \( \varnothing \in \mathcal{T} \) and \( X \in \mathcal{T} \). 
    \item The union of any collection of sets from \( \mathcal{T} \) is in \( \mathcal{T} \), \textit{i.e.}, \( \bigcup_{i \in I} U_i \in \mathcal{T} \), for all \( U_i \in \mathcal{T} \) and \( i \in I \). 
    \item The intersection of any finite collection of sets from \( \mathcal{T} \) is in \( \mathcal{T} \), \textit{i.e.}, \( \bigcap_{i=1}^n U_i \in \mathcal{T} \), for all \( U_i \in \mathcal{T} \) and \( i = 1, 2, \dots, n \).
\end{itemize}
\end{definition}
\begin{definition}[\textbf{Metric Topology Definition}]
\label{def:metrictopology}
Let \( (X, d) \) be a metric space, where \( X \) is a set and \( d: X \times X \to \mathbb{R} \) is a metric function satisfying the following four conditions:
\begin{enumerate}
    \item \textbf{Non-negativity}: For any \( x, y \in X \), \( d(x, y) \geq 0 \).
    \item \textbf{Zero distance equivalence}: \( d(x, y) = 0 \) if and only if \( x = y \).
    \item \textbf{Symmetry}: For any \( x, y \in X \), \( d(x, y) = d(y, x) \).
    \item \textbf{Triangle inequality}: For any \( x, y, z \in X \), \( d(x, z) \leq d(x, y) + d(y, z) \).
\end{enumerate}
In the metric space \( (X, d) \), the metric topology is the topology induced by the metric \( d \), with open sets defined as follows: a set \( U \subseteq X \) is open if and only if for every point \( x \in U \), there exists a positive number \( \varepsilon > 0 \) such that the open ball \(B_d(x, \varepsilon) = \{ y \in X \mid d(x, y) < \varepsilon \}\) is entirely contained in \( U \). In other words, the open sets in the metric topology are those that contain all open balls.
\end{definition}

\section{B The Hierarchical Structure of GrNet}
\label{GrNet}
\paragraph{FRMap:}
In GrNet, we introduce the FRMap layer to learn compact and discriminative Grassmannian representations. This layer transforms the input orthonormal subspace matrices into new matrices via a linear mapping function $f_{fr}$:
\begin{equation}
X_k = f_{fr}(X_{k-1}, W_k) = W_k X_{k-1},
\end{equation}
where $X_{k-1} \in \mathcal{G}(d_{k-1}, q)$ is the input of the $k$-th layer, $W_k \in \mathbb{R}^{d_k \times d_{k-1}}$ ($d_k < d_{k-1}$) is the transformation matrix required to be row full-rank, and $X_k \in \mathbb{R}^{d_k \times q}$ is the output matrix. Since $W_k X_{k-1}$ is generally not an orthonormal basis matrix, we apply QR decomposition in the subsequent ReOrth layer to ensure orthogonality.

The weight space $\mathbb{R}^{d_k \times d_{k-1}}$ of the FRMap layer belongs to a non-compact Stiefel manifold, where the geodesic distance lacks a closed-form solution, making direct optimisation infeasible. To address this, we impose an orthogonality constraint on $W_k$, restricting it to the compact Stiefel manifold $\mathcal{S}{t}(d_k, d_{k-1})$, ensuring stability and convergence during optimisation.
\paragraph{OrthMap:}
The main purpose of the OrthMap layer is to extract the orthogonal representation of the subspace on the Grassmannian manifold from the original data, thereby transforming the data in a way that is suitable for subsequent processing while preserving the properties of the Grassmannian manifold. This transformation is achieved through Eigenvalue Decomposition (EIG). Specifically, for the input projection matrix \( X_k \), the Eigenvalue Decomposition is performed as follows:
\begin{equation}
X_k = U_k \Sigma_k U_k^T,
\end{equation}
where \( U_k \) is an orthogonal matrix containing the eigenvectors, and \( \Sigma_k \) is a diagonal matrix containing the corresponding eigenvalues. The OrthMap layer then selects the eigenvectors corresponding to the top \( q \) largest eigenvalues from \( U_k \), forming a new orthogonal matrix \( U_{k-1,1:q} \). This matrix becomes the output of the layer, expressed as:
\begin{equation}
X_{k+1} = f_{\text{om}}(X_k) = U_{k-1,1:q},
\end{equation}
Here, \( U_{k-1,1:q} \) is the orthogonal matrix formed by the eigenvectors of \( U_k \) corresponding to the largest \( q \) eigenvalues, ensuring that the output remains orthogonal and optimising the data representation on the Grassmannian manifold.

\paragraph{ReOrth:}

The design of the ReOrth layer is inspired by QR decomposition, which is used to convert a regular matrix into an orthogonal matrix. Specifically, the ReOrth layer achieves this by performing a QR decomposition on the input matrix \( X_{k-1} \). The QR decomposition breaks \( X_{k-1} \) into a product of two matrices: an orthogonal matrix \( Q_{k-1} \) and an invertible upper triangular matrix \( R_{k-1} \).

Through QR decomposition, we can transform \( X_k \) into an orthogonal basis matrix. Specifically, at the \( k \)-th layer, by normalizing \( X_{k-1} \), we obtain:
\begin{equation}
X_k = X_{k-1} R_{k-1}^{-1} = Q_{k-1}.
\end{equation}
In the context of convolutional neural networks (ConvNets), several nonlinear activation functions, such as the Rectified Linear Unit (ReLU), have been proposed to enhance discriminative performance. Therefore, incorporating such nonlinearity into the proposed GrNet is also essential. In this case, the ReOrth layer also serves to execute nonlinear activation through QR decomposition.

\paragraph{ProjMap:}
The ProjMap layer is a key component in the GrNet network architecture, designed to enable efficient mapping between the Grassmannian and Euclidean spaces. This layer employs a projection metric, a common choice in Grassmannian metrics, which endows a specific Riemannian manifold with an inner product structure, simplifying the manifold into a flat space. In this flat space, classical Euclidean computational methods can be directly applied to the projection domain of orthogonal matrices.

Specifically, the ProjMap layer applies a projection mapping to the orthogonal matrix \( X_{k-1} \) at the \( k \)-th layer, with the following computation:
\begin{equation}
X_k = f_{\text{pm}}(X_{k-1}) = X_{k-1} X_{k-1}^T.
\end{equation}
This operation maps the orthogonal matrix to its projection matrix, allowing subsequent computations to be performed in the Euclidean space. This approach not only preserves the geometric properties of the Grassmannian manifold but also leverages mature computational techniques from Euclidean space, significantly improving processing efficiency.

\section{C Details of Multi-Subspace Interaction}
\label{Mean}
We provide a brief introduction to the optimisation process of multi-subspace fusion when the number of subspaces \( m' > 2 \). In this case, the Karcher flow algorithm is typically employed to iteratively compute the Fréchet mean. An illustration of one iteration of the algorithm is shown in~\cref{Fréchet}.
\begin{figure}[ht]
\centering
\includegraphics[width=0.5\linewidth,trim={18 30 20 120},clip=true]{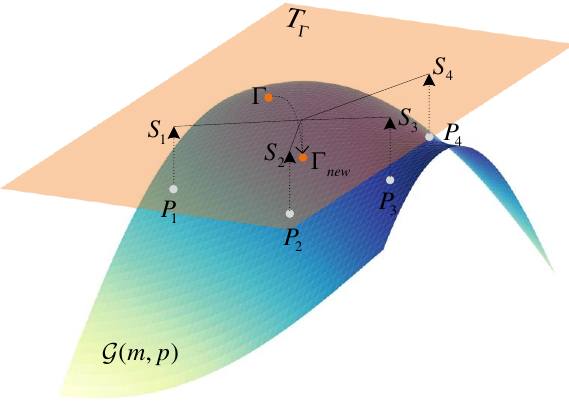}
\caption{\textit{Illustration of one iteration of the fusion between multiple subspaces.} The data points \( P_i \), each representing a different subspace, are logarithmically mapped to the tangent space at \( \Gamma \). These subspaces \( S_i \) are then arithmetically averaged. The result is exponentially mapped back to the manifold, yielding the updated mean \( \Gamma_{\text{new}} \).}
\label{Fréchet}
\end{figure}

\section{D Loss and Optimisation on the Grassmannian Manifold}
\label{Loss and Optimization}
\subsection{D.1 Loss}
\label{Loss}
Cross-entropy loss \(\mathcal{L}_{CE}\) is a common loss function in classification problems, measuring the difference between the predicted probability distribution and the true labels. The mutual information loss \(\mathcal{L}_{C}\) enhances the model's sensitivity to Grassmannian subspaces by maximizing the information complementarity between features~\cite{cheng2020club}. The form of the mutual information loss \(\mathcal{L}_{C}\) is as follows:
\begin{equation}
\mathcal{L}_{C} = - \sum_{i=1}^{n} \sum_{j=1}^{n} \log \left( \frac{\text{cov}(X_i, X_j)}{\sqrt{\text{var}(X_i) \cdot \text{var}(X_j)}} \right).
\end{equation}
In the above equation, \(\text{cov}(X_i, X_j)\) represents the covariance between the Grassmannian subspaces \(X_i\) and \(X_j\), and \(\text{var}(X_i)\) and \(\text{var}(X_j)\) represent the variances of the Grassmannian subspaces \(X_i\) and \(X_j\), respectively. The mutual information loss maximizes the covariance between Grassmannian subspaces while minimizing redundant information, thus encouraging complementarity between the selected subspaces and improving the model's discriminative capability.
\subsection{D.2 Backward Propagation}
\label{Backward Propagation}
In the GMSF-Net, backpropagation must take into account the geometric structure of the manifold to maintain the accuracy of parameter updates. This is especially important for adaptive multi-subspace construction and multi-subspace interaction blocks in the deep Grassmannian network, where the gradient calculation needs to consider both the effectiveness and the constraints of the gradient~\cite{huang2018building}.

\subsubsection{Adaptive Multi-Subspace Construction}

In AdaMSC section, we provide a detailed description of the adaptive multi-subspace construction and introduce $W_c$, where each row corresponds to a newly formed subspace weight. The values within each row represent the adaptive weights assigned to the internal subspaces. We begin by computing the derivative of $L^{(k)}$ with respect to $X$~\cite{huang2017riemannian}.
\begin{equation}
\frac{\partial L^{(k)}}{\partial X} = ([\frac{\partial L^{(k')}}{\partial K}\quad0])^T \cdot\sigma_{c}\cdot W^{'}_c ,
\end{equation}
When a virtual layer \( k' \) is introduced as the subsequent layer of \( k \) in the forward pass, the gradient at layer \( k \) is given by \( \partial L^{(k)} = \ell \circ f^{(l)} \circ \cdots \circ f^{(k)} \), where \( \ell \) denotes the loss function and \( l \) is the final layer of the network. The matrices \( \sigma_c \) is binary matrices extended from \( W_c \). For \( W_c \), the top \( p \) positions with the largest values in each row are marked as \( 1 \) in the corresponding columns in the specific channels of \( \sigma_c \), while the remaining positions are marked as \( 0 \), \( W^{\prime}_c \) represents the descendingly sorted form of \( W_c \), in which the elements of each row are arranged from largest to smallest. Below is the calculation of the derivatives of \( L^{(k)} \) with respect to \( W_c \).
\begin{equation}
\frac{\partial L^{(k)}}{\partial W_c} = \sum_{i=1}^{B} \sum_{j=1}^{n}[\frac{\partial L^{(k')}}{\partial K}\quad0][i,j]\cdot \frac{\partial K[m,n]}{\partial W_c}\cdot X,
\end{equation}
Where \( B \) denotes the number of newly formed subspaces, \( p \) is the number of selected column subspaces, and \( n \) is the number of original subspaces. The calculation also involves the derivative of the intermediate matrix \( K \), and the relationship between \( K \) and the weight matrices \( W_c \) is shown in the following equations.
\begin{equation}
\frac{\partial K[m,n]}{\partial W_c} = 
\begin{cases} 
K[m,n](1 - K[m,n]) & \text{if } m = n \\
-K[m,n] \cdot K[n] & \text{if } m \neq n
\end{cases}
\end{equation}

\subsubsection{Multi-Subspace Interaction Block}

In Multi-Subspace section, we describe the construction of the multi-subspace block, which mainly relies on the operation of the GMSR and the AdaMSC, and provide the derivative of $L^{(k)}$ with respect to $X$.
\begin{equation}
\frac{\partial L^{(k)}}{\partial X} = ([(\frac{\partial L^{(k')}}{\partial K})\mathit{W}^{T}_k\quad0])^T \cdot\sigma_{c}\cdot W^{'}_c,
\end{equation} 
Where, \( \frac{\partial L^{(k')}}{\partial K} W_k^{T} \) represents the gradient contribution of the GMSR to \( X \), where \( W_k \) is the projection matrix. By further multiplying with \( \sigma_c \), the influence of the adaptive layer is incorporated, enabling a joint optimisation effect of the GMSR and the AdaMSC on \( X \).

\section{E Riemannian Batch Normalization}
\label{batch}
Riemannian batch normalization normalizes the data distribution on the SPD manifold, avoiding problems like gradient explosion and vanishing gradients during training~\cite{ioffe2015batch}. The specific form of SPD manifold batch normalization is given by:
\begin{equation}
 \Gamma_{\mathcal{X} \rightarrow \mathcal{B}}(P) = 
\exp_{\mathcal{B}} \left( (\mathcal{B} \mathcal{X}^{-1})^{\frac{1}{2}} P (\mathcal{B} \mathcal{X}^{-1})^{\frac{1}{2}} \right),
\end{equation}
Where, \( \mathcal{X}, \mathcal{B} \in \mathcal{S}ym_n^{+} \), with \( \mathcal{X} \) denoting the input data and \( \mathcal{B} \) its Riemannian barycenter, \( P \in T_{\mathcal{X}} \) lies in the tangent space at \( \mathcal{X} \).

Furthermore, \( \mathcal{B} \) (Riemannian barycenter) can also be regarded as the Fréchet mean~\cite{yang2010riemannian}, which minimizes the weighted distance from all data points:
\begin{equation}
\mathcal{B} = \arg\min_{B \in \mathcal{S}ym_{n}^{+}} \sum_{i=1}^{N} w_i \cdot \mathit{d}_{\mathit{p}}^2(B, X_i),
\end{equation}
where \( w_i \geq 0 \) and \( \sum_{i=1}^{N} w_i = 1 \), $\mathit{d}_{\mathit{p}}^2(\cdot, \cdot)$ represents the geodesic distance.

We also introduce a learnable running mean \( \hat{\mathcal{B}} \), which is utilized exclusively during the inference phase to ensure consistency and stability. This running mean is iteratively updated during training using the Riemannian geodesic formula:
\begin{equation}
\hat{\mathcal{B}} = \hat{\mathcal{B}}^{\frac{1}{2}} \left( \hat{\mathcal{B}}^{-\frac{1}{2}} \mathcal{B} \hat{\mathcal{B}}^{-\frac{1}{2}} \right)^w \hat{\mathcal{B}}^{\frac{1}{2}},
\label{eq:kr}
\end{equation}
where \( \hat{\mathcal{B}} \) is initialized as the identity matrix \( I \), \( \mathcal{B} \) denotes the Riemannian barycenter computed from the current batch, and \( w \in (0, 1) \) is a weighting coefficient controlling the update rate.

\section{F Proofs}
\label{proofs}
\subsection{F.1 Proof of Theorem 1}
\label{proof1}
\begin{theorem}
\label{theorem1}
Let \( A \) and \( B \) be two matrices with orthonormal columns, \textit{i.e.}, \( A^\top A = I \) and \( B^\top B = I \). If there exists an invertible matrix \( P \) such that \( B = AP \), then \( A \) and \( B \) span the same subspace, and thus represent the same point on the Grassmannian \( \mathcal{G}(n, p) \). The proof is in Appendix F.1.
\end{theorem}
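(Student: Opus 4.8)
The plan is to reduce the claim to the elementary fact that right-multiplication by an invertible matrix preserves the column space, and then to connect ``spanning the same subspace'' to ``being the same point of $\mathcal{G}(n,p)$'' via the orthonormal-frame description recalled in the Preliminaries. First I would fix shapes: since $A,B\in\mathbb{R}^{n\times p}$ have orthonormal columns they both have rank $p$, and the identity $B=AP$ forces $P\in\mathbb{R}^{p\times p}$, which by hypothesis is invertible.

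Next I would establish the two column-space inclusions. For $\operatorname{col}(B)\subseteq\operatorname{col}(A)$: every element of $\operatorname{col}(B)$ has the form $Bx=A(Px)$ for some $x\in\mathbb{R}^p$, hence lies in $\operatorname{col}(A)$. For the reverse inclusion, multiply $B=AP$ on the right by $P^{-1}$ to get $A=BP^{-1}$ and run the same argument: $Ay=B(P^{-1}y)\in\operatorname{col}(B)$. Therefore $\operatorname{col}(A)=\operatorname{col}(B)$, which is a single $p$-dimensional linear subspace of $\mathbb{R}^n$ (full-dimensional because orthonormality gives each matrix rank $p$). Invoking the identification used earlier — a point of $\mathcal{G}(n,p)$ is a $p$-dimensional subspace of $\mathbb{R}^n$, with an orthonormal basis matrix $U$ ($U^\top U=I_p$) serving as a representative for $\operatorname{col}(U)$ — we conclude that $A$ and $B$ are representatives of one and the same point. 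As a consistency remark worth recording, the hypotheses in fact force $P$ itself to be orthogonal, since $I_p=B^\top B=P^\top(A^\top A)P=P^\top P$; in the quotient model $\mathcal{G}(n,p)=\{U:U^\top U=I_p\}/O(p)$ this says $A$ and $B$ lie in the same $O(p)$-orbit, giving the conclusion directly.

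There is no genuine obstacle here; the only point requiring a moment's care is to state the final step against whichever concrete model of $\mathcal{G}(n,p)$ is in force (subspaces directly, or orthonormal frames modulo $O(p)$) and to note explicitly that the orthonormality hypothesis is exactly what guarantees each of $A$ and $B$ spans a genuinely $p$-dimensional subspace, so that the equality of column spaces is an equality of points rather than merely an inclusion of subspaces of possibly different dimension.
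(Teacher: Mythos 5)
Your proposal is correct and follows essentially the same route as the paper's Appendix F.1 proof: establish $\operatorname{col}(A)=\operatorname{col}(B)$ from the invertibility of $P$, then invoke the identification of Grassmannian points with $p$-dimensional subspaces (equivalently, orthonormal frames modulo an invertible change of basis). If anything, your version is slightly cleaner, since you use the hypothesis $B=AP$ directly with both inclusions spelled out, whereas the paper re-derives $P=A^\top B$ and asserts $B=AP$ from it (a step that implicitly presupposes $\operatorname{col}(B)\subseteq\operatorname{col}(A)$); your added observation that $P$ must in fact be orthogonal is a correct and worthwhile refinement.
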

\begin{proof}
Let \( A \) and \( B \) be two matrices with orthonormal columns, \textit{i.e.},
\begin{equation}
A^\top A = I \quad \text{and} \quad B^\top B = I.
\end{equation}
We aim to show that \( A \) and \( B \) span the same subspace, and hence correspond to the same point on the Grassmannian manifold.

Since \( A \) is orthonormal, its Moore–Penrose pseudoinverse is \( A^\top \). We define a transformation matrix \( P \) as
\begin{equation}
P = A^\top B.
\end{equation}
Then, we can express \( B \) as
\begin{equation}
B = A P.
\end{equation}
Because both \( A \) and \( B \) are orthonormal matrices of the same dimension, \( P \) is square and full-rank, thus invertible.

This implies that the column vectors of \( B \) can be written as linear combinations of the columns of \( A \), and vice versa. Therefore, \( A \) and \( B \) span the same subspace.

According to the definition of the Grassmannian manifold, matrices with column spaces that span the same subspace are considered equivalent, they belong to the same equivalence class and represent the same point on the Grassmannian manifold.

Hence, \( A \) and \( B \) represent the same point on the Grassmannian manifold.
\end{proof}
\subsection{F.2 Proof of Metric Topology Structure on Grassmannian Manifold}
\label{Metric}
\begin{proposition}
\label{metricto}
The Grassmannian manifold is a metric topology.
\end{proposition}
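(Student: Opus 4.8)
The plan is to verify that the projection metric $d_p$ of Equation~(\ref{project}) satisfies the four axioms of Definition~\ref{def:metrictopology} on the set $\mathcal{G}(n,p)$ of $p$-dimensional subspaces; once this is done, the topology induced by $d_p$ is by definition a topology on the Grassmannian, which is exactly the assertion. Three of the four axioms are essentially inherited from the fact that $\|\cdot\|_F$ is a genuine norm on $\mathbb{R}^{n\times n}$: non-negativity holds because $d_p = 2^{-1/2}\|\cdot\|_F \ge 0$; symmetry follows from $\|A\|_F = \|-A\|_F$ with $A = X_1X_1^\top - X_2X_2^\top$; and the triangle inequality follows by writing $X_1X_1^\top - X_3X_3^\top = (X_1X_1^\top - X_2X_2^\top) + (X_2X_2^\top - X_3X_3^\top)$ and invoking subadditivity of $\|\cdot\|_F$, the scalar factor $2^{-1/2}$ being harmless throughout.

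The substantive step — and the one I expect to be the main obstacle — is the zero-distance equivalence axiom, which here must be read as a statement about \emph{subspaces} rather than basis matrices: I must show $d_p(X_1,X_2)=0$ if and only if $X_1$ and $X_2$ represent the same point of $\mathcal{G}(n,p)$. First I would observe that $P_i := X_iX_i^\top$ is the orthogonal projector onto $\mathrm{span}(X_i)$: since $X_i^\top X_i = I_p$ one has $P_i^2 = P_i = P_i^\top$ and $P_iX_i = X_i$, so $\mathrm{col}(P_i) = \mathrm{span}(X_i)$. Then $d_p(X_1,X_2)=0 \iff \|P_1-P_2\|_F = 0 \iff P_1 = P_2$, and two orthogonal projectors coincide exactly when their ranges coincide; hence $d_p(X_1,X_2)=0$ iff $\mathrm{span}(X_1)=\mathrm{span}(X_2)$, i.e. iff the two orthonormal matrices lie in the same equivalence class. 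The same computation also shows $d_p$ is well defined on the Grassmannian: if $X_1 = X_2 Q$ with $Q$ orthogonal (Theorem~\ref{theorem1} with the invertible matrix taken to be $Q$), then $P_1 = X_2 Q Q^\top X_2^\top = P_2$, so the value of $d_p$ does not depend on the chosen basis representatives.

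Finally I would assemble these facts: $(\mathcal{G}(n,p), d_p)$ satisfies all four conditions of Definition~\ref{def:metrictopology}, so it is a metric space, and therefore the collection of subsets of $\mathcal{G}(n,p)$ that contain an open $d_p$-ball around each of their points is a topology on $\mathcal{G}(n,p)$, namely the metric topology. This establishes that the Grassmannian, carrying its induced metric topology, is a metric topology in the sense of Definition~\ref{def:metrictopology}, completing the proof. The only point worth stating explicitly to avoid any gap is that the underlying set is taken to be the set of $p$-dimensional subspaces (equivalently, orthonormal $n\times p$ matrices modulo right multiplication by orthogonal $p\times p$ matrices), so that axiom~2 holds on the nose; with that convention no further obstacles remain.
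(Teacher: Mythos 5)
Your proof follows essentially the same route as the paper's: verify that the projection metric $d_p$ satisfies the metric axioms on $\mathcal{G}(n,p)$ and then invoke the induced metric topology. The one substantive difference is that the paper's proof in Appendix~F.2 only checks non-negativity, symmetry, and the triangle inequality, silently skipping the zero-distance equivalence axiom --- which, as you correctly identify, is the only axiom that actually requires Grassmannian-specific reasoning. Your argument via orthogonal projectors ($P_i = X_iX_i^\top$ is idempotent and symmetric with range $\mathrm{span}(X_i)$, so $P_1 = P_2$ iff the subspaces coincide) closes that gap, and your observation that $d_p$ is well defined on equivalence classes of orthonormal bases is likewise absent from the paper but necessary for the statement to make sense on the Grassmannian as a set of subspaces. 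So your proposal is not merely correct; it is strictly more complete than the paper's own proof on the same skeleton.
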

\begin{proof} 
The Grassmannian manifold \( \mathcal{G}(n, p) \) is a metric topology space. We use the Projection metric \( d_p(U, V) \) to measure the distance between two \( p \)-dimensional subspaces \( U \) and \( V \). This metric is computed using the projection distance of the two subspaces in Euclidean space, with the specific expression as shown in  Equation (1). This metric satisfies the following conditions:

\textbf{1.Non-negativity:}
\begin{equation}
d_p(X, Y) = \| P_X - P_Y \|_F \geq 0.
\end{equation}
\textbf{2.Symmetry:}
\begin{equation}
\begin{aligned}
\| P_X - P_Y \|_F = \| P_Y - P_X \|_F \\
\Rightarrow \quad d_p(X, Y) = d_p(Y, X).
\end{aligned}
\end{equation}
\textbf{3.Triangle inequality:}
\begin{equation}
\begin{aligned}
\| P_X - P_Z \|_F \leq \| P_X - P_Y \|_F + \| P_Y - P_Z \|_F \\
\Rightarrow \quad d_p(X, Z) \leq d_p(X, Y) + d_p(Y, Z).
\end{aligned}
\end{equation}
Since this metric satisfies the properties of non-negativity, symmetry, and the triangle inequality, it is a valid metric. Under this metric, the open sets on the Grassmannian manifold are defined by open balls, which are given by the following form:
\begin{equation}
B_d(V, r) = \{ U \in \mathcal{G}(n, p) \mid d(U, V) < r \}.
\end{equation}
The topology generated by these open balls is the metric topology. Therefore, we can conclude that the Grassmannian manifold \( \mathcal{G}(n, p) \) is a metric topological space.
\end{proof}
 \subsection{F.3 Proof of Subspace Convergence on the Grassmannian Manifold}
 \label{Subspace Convergence}
\begin{proposition}[Subspace Convergence on the Grassmannian Manifold]
Let the subspaces generated in each iteration of the training process be \( V_n = \text{span}(A_n) \), where \( A_n \in \mathcal{G}(n, p) \) is a matrix with full column rank, and its columns form an orthogonal basis, satisfying \( A_n^T A_n = I_p \). The converged subspace is denoted as \( V_\infty = \text{span}(A_\infty) \).

To characterize the convergence of \( V_n \to V_\infty \), we introduce the metric structure on the Grassmannian manifold \( \mathcal{G}(n, p) \). One commonly used metric is the projection distance:
\begin{equation}
\begin{aligned}
d_p(V_n, V_\infty) = \left\| P_n - P_\infty \right\|_F,\\
P_n = A_n A_n^T, \quad P_\infty = A_\infty A_\infty^T.
\end{aligned}
\end{equation}
This distance satisfies the four axioms of a metric space and is naturally induced on the Grassmannian manifold.
\end{proposition}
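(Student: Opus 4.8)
The plan is to verify that $d_p$, defined through the orthogonal projectors $P_n = A_n A_n^\top$, is a genuine metric on $\mathcal{G}(n,p)$ and that the topology it generates is the intrinsic manifold topology, so that the limit relation $d_p(V_n, V_\infty) \to 0$ invoked in the convergence analysis is well posed. First I would establish well-definedness. A point of $\mathcal{G}(n,p)$ is an equivalence class of orthonormal bases, and by Theorem~\ref{theorem1} any two orthonormal bases $A$, $A'$ of the same subspace satisfy $A' = AQ$ for some $Q \in O(p)$; hence $P' = A'A'^\top = AQQ^\top A^\top = AA^\top = P$, so the assignment $V \mapsto P_V$ does not depend on the chosen representative. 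This is the step that turns $d_p$ into a function on the manifold rather than on the Stiefel representative, and it is where Theorem~\ref{theorem1} is used.

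Next I would check the four metric axioms. Non-negativity and symmetry are immediate, since $d_p(V_n,V_\infty) = \|P_n - P_\infty\|_F \ge 0$ and the Frobenius norm is symmetric under the sign change $P_n - P_\infty \mapsto P_\infty - P_n$. The triangle inequality is inherited from the ambient normed space $(\mathbb{R}^{n\times n}, \|\cdot\|_F)$: writing $P_X - P_Z = (P_X - P_Y) + (P_Y - P_Z)$ and applying the norm triangle inequality gives $d_p(V_X,V_Z) \le d_p(V_X,V_Y) + d_p(V_Y,V_Z)$, exactly as sketched in Proposition~\ref{metricto}. The only non-formal axiom is the identity of indiscernibles: I would note that $P_V = AA^\top$ is symmetric and, using $A^\top A = I_p$, idempotent, hence the orthogonal projector onto $V = \operatorname{range}(A)$, and that any orthogonal projector is recovered from its range; therefore $P_{V_n} = P_{V_\infty}$ forces $V_n = \operatorname{range}(P_{V_n}) = \operatorname{range}(P_{V_\infty}) = V_\infty$, while the converse is the well-definedness established above. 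Together with Proposition~\ref{metricto} this delivers the metric topology on $\mathcal{G}(n,p)$ with the open balls $B_d(V,r)$ as a basis.

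Finally, for the clause that $d_p$ is ``naturally induced'' on the manifold, I would exhibit the map $\iota : \mathcal{G}(n,p) \to \mathrm{Sym}(n)$, $\iota(V) = P_V = AA^\top$, whose image is the set of symmetric idempotents of rank $p$; $\iota$ is smooth and injective (injectivity being the identity-of-indiscernibles argument just given), and $d_p$ is precisely the pullback under $\iota$ of the Euclidean (Frobenius) distance on $\mathrm{Sym}(n)$, up to the $2^{-1/2}$ normalisation of Equation~(\ref{project}). Since $\mathcal{G}(n,p)$ is compact and $\mathrm{Sym}(n)$ is Hausdorff, the continuous injection $\iota$ is a homeomorphism onto its image, so the metric topology induced by $d_p$ coincides with the intrinsic manifold topology of $\mathcal{G}(n,p)$ — which is the meaning of ``naturally induced'' and which makes rigorous the equivalence $d_p(V_n, V_\infty) \to 0 \iff V_n \to V_\infty$ used in the subsequent convergence steps. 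I expect the main obstacle to be this identity-of-indiscernibles / well-definedness pair: everything else is a one-line consequence of properties of the Frobenius norm, whereas that step requires the linear-algebra fact that the orthogonal projector faithfully encodes its range, combined with the $O(p)$-invariance supplied by Theorem~\ref{theorem1}.
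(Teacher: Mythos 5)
Your proposal is correct, and it takes a noticeably different (and more complete) route than the paper. The paper distributes the content of this proposition across two places: Proposition~\ref{metricto} in Appendix F.2 verifies non-negativity, symmetry, and the triangle inequality by inheriting them from the Frobenius norm --- exactly as you do --- but it silently omits the fourth axiom ($d_p(X,Y)=0\Leftrightarrow X=Y$) and never checks that $V\mapsto P_V$ is independent of the chosen orthonormal basis; the remainder of Appendix F.3 then proves a separate lemma, namely the quantitative bound $\|P_n-P_\infty\|_F\le C\|A_n-A_\infty\|_F$, which is what the paper actually uses later (Appendix F.6) to pass from convergence of the basis matrices to convergence of the subspaces. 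Your proof supplies precisely the pieces the paper leaves implicit: the $O(p)$-invariance $A'A'^\top=AQQ^\top A^\top=AA^\top$ making $d_p$ well defined on equivalence classes, the identity of indiscernibles via the fact that an orthogonal projector is determined by its range, and the justification of ``naturally induced'' through the embedding $\iota(V)=P_V$ into $\mathrm{Sym}(n)$ together with the compact-to-Hausdorff homeomorphism argument, which is the honest reason $d_p$-convergence is equivalent to convergence in the manifold topology. What your route does not reproduce is the Lipschitz-type estimate of the paper's accompanying lemma; that estimate is not part of the proposition as stated, but if you intend your argument to support the downstream convergence claim in F.6 you would still need it (or the continuity of $A\mapsto AA^\top$ on the compact Stiefel manifold, which follows from the expansion $A_nA_n^\top-A_\infty A_\infty^\top=(A_n-A_\infty)A_n^\top+A_\infty(A_n-A_\infty)^\top$ and gives $C=2$). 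On balance, your version is the more rigorous proof of the proposition itself; the paper's version is thinner on the axioms but carries the extra quantitative lemma.
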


\begin{lemma}[Matrix Convergence \( \Rightarrow \) Subspace Convergence]
If \( \lim_{n \to \infty} \|A_n - A_\infty\|_F = 0 \), then:
\begin{equation}
\lim_{n \to \infty} d_p(V_n, V_\infty) = 0.
\end{equation}
\end{lemma}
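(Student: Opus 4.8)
The plan is to show that the map sending a full-rank orthonormal matrix $A$ to its orthogonal projector $P = AA^\top$ is (Lipschitz) continuous in the Frobenius norm, so that convergence of $A_n$ forces convergence of $P_n$ and hence $d_p(V_n, V_\infty) \to 0$. Concretely, I would write $P_n - P_\infty = A_nA_n^\top - A_\infty A_\infty^\top$ and insert a telescoping term: $P_n - P_\infty = A_n A_n^\top - A_n A_\infty^\top + A_n A_\infty^\top - A_\infty A_\infty^\top = A_n(A_n - A_\infty)^\top + (A_n - A_\infty)A_\infty^\top$.

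From there I would apply the triangle inequality for $\|\cdot\|_F$ together with the submultiplicativity bound $\|MN\|_F \le \|M\|_{\mathrm{op}}\|N\|_F \le \|M\|_F\|N\|_F$ (or, more tightly, the operator-norm version). Since $A_n$ and $A_\infty$ have orthonormal columns, $\|A_n\|_{\mathrm{op}} = \|A_\infty\|_{\mathrm{op}} = 1$, which gives the clean estimate
\begin{equation}
d_p(V_n, V_\infty) = \|P_n - P_\infty\|_F \le \|A_n - A_\infty\|_F \|A_\infty\|_{\mathrm{op}} + \|A_n\|_{\mathrm{op}}\|A_n - A_\infty\|_F = 2\|A_n - A_\infty\|_F.
\end{equation}
Taking $n \to \infty$ and using the hypothesis $\|A_n - A_\infty\|_F \to 0$ yields $d_p(V_n, V_\infty) \to 0$, which is exactly the claim. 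One should also note in passing that $P_n$ and $P_\infty$ are genuinely the projectors associated with $V_n$ and $V_\infty$, so that $d_p$ here agrees with the projection metric of Equation (1) up to the harmless normalization constant, and that the bound does not depend on the choice of orthonormal representatives $A_n$ (any other choice differs by right-multiplication by an orthogonal matrix, which leaves $AA^\top$ unchanged — consistent with Theorem 1).

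There is no serious obstacle here; the statement is essentially the continuity of the Grassmannian chart in projector coordinates, and the only mild subtlety is deciding whether to phrase the intermediate inequality with the operator norm (cleaner constants, needs the remark that the spectral norm of a matrix with orthonormal columns is $1$) or to stay entirely within Frobenius norms (slightly lossier but fully elementary). I would opt for the operator-norm route since it makes the constant $2$ transparent and reinforces why orthonormality of the representatives is what makes the bound uniform. The converse direction — subspace convergence implying one can choose matrix representatives that converge — is genuinely more delicate (it requires aligning bases, e.g. via the CS decomposition or a Procrustes rotation) but is not asserted by this lemma, so I would not address it.
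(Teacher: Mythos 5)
Your proof is correct and follows essentially the same route as the paper's: both reduce the claim to a Lipschitz bound $\|P_n - P_\infty\|_F \le C\,\|A_n - A_\infty\|_F$ for the projector map $A \mapsto AA^\top$. The only difference is that the paper merely asserts this inequality with an unspecified ``constant $C$ related to $A_\infty$,'' whereas your telescoping decomposition $P_n - P_\infty = A_n(A_n - A_\infty)^\top + (A_n - A_\infty)A_\infty^\top$ together with $\|A_n\|_{\mathrm{op}} = \|A_\infty\|_{\mathrm{op}} = 1$ actually derives it and shows the constant is uniformly $2$, independent of $A_\infty$ — a strictly more complete argument.
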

\begin{proof}
Since the Frobenius norm converges, the difference between the projection matrices also shrinks:
\begin{equation}
\|P_n - P_\infty\|_F  \leq C \|A_n - A_\infty\|_F .
\end{equation}
where \( C \) is a constant related to \( A_\infty \). Therefore, \( d_p(V_n, V_\infty) \to 0 \), meaning the subspaces converge on the Grassmannian manifold.
\end{proof}
\subsection{F.4 Proof of Convergence Equivalence Under the Logarithmic Map}
\label{Logarithmic}
\begin{proposition}[Convergence Equivalence Under the Logarithmic Map]
To better characterize convergence behavior on the manifold, we use the Logarithmic map to locally linearize the Grassmannian manifold. Let \( V_\infty \in \mathcal{G}(n, p) \) be a reference point, then there exists a logarithmic map:
\begin{equation}
\log_{V_\infty}: \mathcal{G}(n, p) \to T_{V_\infty} \mathcal{G}(n, p),
\end{equation}
which maps the subspace \( V_n \) to a symmetric matrix \( \xi_n \) in the tangent space, \textit{i.e.},
\begin{equation}
\xi_n = \log_{V_\infty}(V_n).
\end{equation}
\textbf{Property:} When \( d_p(V_n, V_\infty) \to 0 \), we have \( \|\xi_n\|_F \to 0 \). This shows that if the subspaces converge on the manifold, their representations in the tangent space also approach the zero vector.
\end{proposition}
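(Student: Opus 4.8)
The plan is to reduce the statement to an elementary inequality between $\sin\theta$ and $\theta$ on $[0,\pi/2]$, routed through the principal angles between $V_n$ and $V_\infty$. First I would recall the normal form for a pair of points on $\mathcal{G}(n,p)$: up to the right action of $O(p)$ on the chosen bases, $V_n$ and $V_\infty$ are completely described by their principal angles $0\le\theta_1^{(n)}\le\cdots\le\theta_p^{(n)}\le\pi/2$, i.e.\ the arccosines of the singular values of $A_\infty^\top A_n$. In this normal form both quantities in the statement diagonalise. The nonzero eigenvalues of $P_n-P_\infty$ are exactly $\pm\sin\theta_i^{(n)}$, so $\|P_n-P_\infty\|_F^2 = 2\sum_{i=1}^p\sin^2\theta_i^{(n)}$, and hence $d_p(V_n,V_\infty)^2$ equals $\sum_i\sin^2\theta_i^{(n)}$ under the normalisation of Equation~(1) (twice that under the convention of Proposition~F.3). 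On the other side, the singular values of the Grassmann logarithm $\xi_n=\log_{V_\infty}(V_n)$ are the principal angles themselves --- this is the defining property of $\log$ in Appendix~A.2, which returns the initial velocity of the length-minimising geodesic from $V_\infty$ to $V_n$, of length $\sqrt{\sum_i(\theta_i^{(n)})^2}$. Therefore $\|\xi_n\|_F^2 = \sum_{i=1}^p(\theta_i^{(n)})^2$.

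With these two identities in hand I would invoke Jordan's inequality, $\sin\theta\ge\frac{2}{\pi}\theta$ for $\theta\in[0,\pi/2]$ (the graph of $\sin$ lies above its chord on this interval by concavity), equivalently $\theta^2\le\frac{\pi^2}{4}\sin^2\theta$, and sum over $i$:
\[
\|\xi_n\|_F^2 \;=\; \sum_{i=1}^p(\theta_i^{(n)})^2 \;\le\; \frac{\pi^2}{4}\sum_{i=1}^p\sin^2\theta_i^{(n)} \;=\; c\,d_p(V_n,V_\infty)^2 ,
\]
for a fixed constant $c$ independent of $n$ (namely $c=\pi^2/4$ with the normalisation of Equation~(1), $c=\pi^2/8$ with that of Proposition~F.3). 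Letting $n\to\infty$ and using the hypothesis $d_p(V_n,V_\infty)\to 0$ forces $\|\xi_n\|_F\to 0$, which is the claimed Property. The argument is moreover quantitative: it gives the Lipschitz-type estimate $\|\xi_n\|_F\le\sqrt{c}\,d_p(V_n,V_\infty)$, which is precisely what is used later in Appendix~F to transfer the Riemannian-gradient-descent convergence $d_p(S'(t),S^*)\to 0$ into tangent-space language.

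Before the displayed bound can be applied I would dispatch one preliminary: the logarithm $\log_{V_\infty}$ is only defined on the injectivity domain of $\exp_{V_\infty}$, i.e.\ where the largest principal angle is strictly below $\pi/2$, equivalently where $A_\infty^\top A_n$ is invertible (the SVD in Appendix~A.2 requires $(U^\top Y)^{-1}$). This is immediate from the hypothesis: $d_p(V_n,V_\infty)\to 0$ already forces $\sin\theta_i^{(n)}\to 0$ and hence $\theta_i^{(n)}\to 0$ for every $i$, so there is an index $N$ beyond which every principal angle lies in $[0,\pi/2)$; for $n\ge N$ the vector $\xi_n$ is well-defined and the bound applies to the tail of the sequence, which suffices for a statement about $n\to\infty$.

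The step I expect to be the main (and essentially only) obstacle is the bookkeeping in the first paragraph --- pinning down that the projection metric reads off the $\sin\theta_i$ while the logarithm reads off the $\theta_i$, and keeping the two normalisations of $d_p$ (Equation~(1) versus Proposition~F.3) consistent. Everything after that is the elementary squeeze via Jordan's inequality. As an alternative non-quantitative route I would note that $d_p$ induces the manifold topology on the compact manifold $\mathcal{G}(n,p)$ and that $\log_{V_\infty}$ is continuous at $V_\infty$ with $\log_{V_\infty}(V_\infty)=0$, so $V_n\to V_\infty$ directly yields $\xi_n\to 0$; I would nonetheless keep the principal-angle computation as the main proof, since only it delivers the rate needed elsewhere in Appendix~F.
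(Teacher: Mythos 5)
Your proposal is correct, and it is worth noting at the outset that the paper itself contains no proof of this proposition: Appendix F.4 is headed ``Proof of \dots'' but consists only of the statement, with the Property asserted without justification (as are F.5 and F.6). Your argument therefore fills a genuine gap rather than paralleling an existing one. The two identities you rely on are both right: the nonzero eigenvalues of $P_n-P_\infty$ are $\pm\sin\theta_i^{(n)}$, giving $\|P_n-P_\infty\|_F^2=2\sum_i\sin^2\theta_i^{(n)}$, and the singular values of the Grassmann logarithm are the principal angles, giving $\|\xi_n\|_F^2=\sum_i(\theta_i^{(n)})^2$; Jordan's inequality then yields the Lipschitz bound $\|\xi_n\|_F\le\sqrt{c}\,d_p(V_n,V_\infty)$, and your handling of the injectivity domain (the tail of the sequence eventually has all principal angles below $\pi/2$, so $\xi_n$ is well-defined for large $n$) is exactly the point most likely to be overlooked. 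Two caveats on bookkeeping. First, your identification of $\|\xi_n\|_F$ with the geodesic length uses the \emph{standard} logarithm $\operatorname{Log}_U(Y)=O\arctan(\Sigma)R^\top$; the paper's Definition in Appendix A.2 writes $\arccos(\Sigma)$, which applied to the singular values $\tan\theta_i$ of $(I-UU^\top)Y(U^\top Y)^{-1}$ would not return the principal angles, so you are implicitly (and correctly) repairing a typo in the paper's formula. Second, the proposition's description of $\xi_n$ as a ``symmetric matrix'' is consistent only with the projector view of the tangent space, not with the orthonormal-basis view of Appendix A.2 in which you compute; this does not affect the convergence claim, since the two parametrisations of the tangent space are related by a linear isomorphism with norm bounded independently of $n$. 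Your non-quantitative fallback (continuity of $\log_{V_\infty}$ at $V_\infty$ on the compact metric space $(\mathcal{G}(n,p),d_p)$) is also valid and is probably the argument the authors had in mind.
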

\subsection{F.5 Proof of Convergence Equivalence Under the Exponential Map}
\label{Exponential}
\begin{proposition}[The Exponential Map Guarantees Inverse Convergence of Subspaces]
Furthermore, the inverse of the logarithmic map is the exponential map:
\begin{equation}
\exp_{V_\infty}: T_{V_\infty} \mathcal{G}(n, p) \to \mathcal{G}(n, p).
\end{equation}
If there exists a sequence \( \xi_n \in T_{V_\infty} \mathcal{G}(n, p) \) such that \( \|\xi_n\|_F \to 0 \), then:
\begin{equation}
\exp_{V_\infty}(\xi_n) \to V_\infty.
\end{equation}
Thus, if the subspaces converge in the tangent space, it follows that they also converge on the Grassmannian manifold.
\end{proposition}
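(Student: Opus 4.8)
The plan is to derive the result from the closed-form Grassmannian exponential map recorded in Appendix~A.2, combined with the Lemma of Appendix~F.3 (``Matrix Convergence $\Rightarrow$ Subspace Convergence''). Fix an orthonormal basis matrix $A_\infty$ for $V_\infty$, so $A_\infty^\top A_\infty = I_p$ and $P_\infty = A_\infty A_\infty^\top$. For each $n$ take a thin singular value decomposition $\xi_n = Q_n \Sigma_n R_n^\top$ of the tangent vector $\xi_n \in T_{V_\infty}\mathcal{G}(n,p)$. Because $\xi_n$ is tangent at $V_\infty$ it satisfies $A_\infty^\top \xi_n = 0$, which forces the columns of $Q_n$ carrying the nonzero singular values to be orthogonal to the column space of $A_\infty$; moreover $R_n$ is orthogonal, so $A_\infty R_n$ is again an orthonormal basis of $V_\infty$ (this is exactly Theorem~1 with invertible $P = R_n$). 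The exponential map formula then produces the basis
\[
A_n \;:=\; A_\infty R_n \cos(\Sigma_n) + Q_n \sin(\Sigma_n)
\]
of the subspace $\exp_{V_\infty}(\xi_n)$, and a direct computation using $A_\infty^\top Q_n = 0$, $Q_n^\top Q_n = I_p$, $R_n^\top R_n = I_p$ and $\cos^2(\Sigma_n) + \sin^2(\Sigma_n) = I_p$ shows $A_n^\top A_n = I_p$, so $A_n$ is orthonormal and $P_n = A_n A_n^\top$ is the projection of $\exp_{V_\infty}(\xi_n)$.

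Next I would make the dependence on $\|\xi_n\|_F$ quantitative. Since the Frobenius norm is unitarily invariant, $\|\xi_n\|_F = \|\Sigma_n\|_F$, so $\|\xi_n\|_F \to 0$ forces every singular value to tend to $0$; applying the elementary scalar estimates $|\cos t - 1| \le \tfrac12 t^2$ and $|\sin t| \le |t|$ entrywise to the diagonal matrix $\Sigma_n$ gives $\|\cos(\Sigma_n) - I\|_F \le \tfrac12 \|\Sigma_n\|_F^2$ and $\|\sin(\Sigma_n)\|_F \le \|\Sigma_n\|_F$. Writing $B_n := A_\infty R_n$, which is an orthonormal basis of $V_\infty$ for every $n$, and using that $B_n$ and $Q_n$ have operator norm $1$,
\[
\|A_n - B_n\|_F \;\le\; \|\cos(\Sigma_n) - I\|_F + \|\sin(\Sigma_n)\|_F \;\le\; \tfrac12\|\xi_n\|_F^2 + \|\xi_n\|_F \;\longrightarrow\; 0 .
\]

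Finally I would transfer this matrix convergence to the projection metric. From the identity $P_n - P_\infty = A_n A_n^\top - B_n B_n^\top = A_n(A_n - B_n)^\top + (A_n - B_n)B_n^\top$ and submultiplicativity of the Frobenius norm under the operator norm, $\|P_n - P_\infty\|_F \le (\|A_n\|_{\mathrm{op}} + \|B_n\|_{\mathrm{op}})\,\|A_n - B_n\|_F = 2\|A_n - B_n\|_F \to 0$; equivalently, this last implication is precisely the Lemma of Appendix~F.3 applied to the sequence $A_n$. Hence $d_p(\exp_{V_\infty}(\xi_n), V_\infty) = \|P_n - P_\infty\|_F \to 0$ (up to the fixed factor $2^{-1/2}$ appearing in Equation~(1)), i.e.\ $\exp_{V_\infty}(\xi_n) \to V_\infty$ in the metric topology established in Proposition~F.2. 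Together with the reverse implication in Appendix~F.4 this yields the stated equivalence between convergence in the tangent space and convergence on the Grassmannian.

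I expect the main obstacle to be the careful bookkeeping around a possibly rank-deficient tangent vector $\xi_n$: one has to check that the closed-form exponential map is still well defined there, that it reduces to $\exp_{V_\infty}(0) = V_\infty$ when $\xi_n = 0$, and that the identity $A_\infty^\top Q_n = 0$ can be maintained on the kernel block of $\Sigma_n$ by choosing the orthonormal completion of $Q_n$ inside $V_\infty^{\perp}$. A secondary point worth stating explicitly is that the hypothesis ``$\|\xi_n\|_F \to 0$'' refers to the Frobenius-norm topology on the finite-dimensional space $T_{V_\infty}\mathcal{G}(n,p)$ --- the standard one --- which is what legitimises the SVD-based estimates above; any other norm would be equivalent and the argument unchanged.
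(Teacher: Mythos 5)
Your proposal is correct, and it is worth noting that it does substantially more than the paper does: Appendix F.5 states the proposition but contains no \verb|proof| environment at all, and the main text only invokes ``the continuity of the exponential map'' as justification. Your argument is a genuine, self-contained proof: you instantiate the closed-form exponential map of Appendix A.2 at \(V_\infty\), verify orthonormality of the resulting basis \(A_n = A_\infty R_n\cos(\Sigma_n)+Q_n\sin(\Sigma_n)\) from \(A_\infty^\top Q_n=0\), and then obtain the quantitative bound \(d_p(\exp_{V_\infty}(\xi_n),V_\infty)\le 2^{1/2}\bigl(\|\xi_n\|_F+\tfrac12\|\xi_n\|_F^2\bigr)\) by the scalar estimates on \(\cos\) and \(\sin\) together with the Lemma of Appendix F.3 (or equivalently your direct splitting \(P_n-P_\infty=A_n(A_n-B_n)^\top+(A_n-B_n)B_n^\top\)). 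This buys an explicit local Lipschitz-type modulus for \(\exp_{V_\infty}\) near the origin rather than a bare continuity assertion, and your closing remarks correctly flag the only delicate points: the non-uniqueness of the \(Q_n\)-columns on the kernel block of \(\Sigma_n\) (which, as you note, is harmless because those columns are multiplied by \(\sin(0)=0\) and can in any case be completed inside \(V_\infty^\perp\)), and the fact that \(B_n=A_\infty R_n\) spans \(V_\infty\) by Theorem 1. The only cosmetic caveat is the constant: with the paper's Equation (1) the projection metric carries a factor \(2^{-1/2}\), so your final bound should be read up to that fixed factor, exactly as you state.
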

\subsection{F.6 Proof of Continuity and Subspace Stability in Training}
\label{Continuity}
\begin{corollary}[Continuity and Subspace Stability in Neural Network Training]
The mapping \( A_n = f_\theta^{(n)}(x) \) generated by a neural network during training can be viewed as a continuous map from the input space to the Grassmannian manifold. If the loss function is continuous with respect to \( A_n \), and it converges to a stable state during training, \textit{i.e.},
\begin{equation}
\lim_{n \to \infty} \|A_n - A_\infty\|_F = 0,
\end{equation}
then, based on the propositions above, we have:
\begin{equation}
\lim_{n \to \infty} d_p(\text{span}(A_n), \text{span}(A_\infty)) = 0.
\end{equation}
This implies that the neural network eventually stabilizes at a fixed subspace, completing the learning and convergence of the "optimal subspace."
\end{corollary}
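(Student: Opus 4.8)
The plan is to recognise that the analytic heart of the Corollary is already contained in the Lemma of Appendix~F.3 (matrix convergence $\Rightarrow$ subspace convergence): the hypothesis $\lim_{n\to\infty}\|A_n-A_\infty\|_F=0$ is exactly that Lemma's premise, so the displayed conclusion is immediate once the supporting facts are assembled. Concretely, I would first recall the Lipschitz estimate that drives the Lemma. Adding and subtracting $A_nA_\infty^\top$ gives
\begin{equation}
P_n-P_\infty = A_nA_n^\top - A_\infty A_\infty^\top = A_n(A_n-A_\infty)^\top + (A_n-A_\infty)A_\infty^\top .
\end{equation}
Taking Frobenius norms, using $\|MN\|_F\le\|M\|_{\mathrm{op}}\|N\|_F$, $\|M^\top\|_F=\|M\|_F$, and orthonormality $\|A_n\|_{\mathrm{op}}=\|A_\infty\|_{\mathrm{op}}=1$, yields $\|P_n-P_\infty\|_F\le 2\|A_n-A_\infty\|_F$, whence
\begin{equation}
d_p\bigl(\text{span}(A_n),\text{span}(A_\infty)\bigr)=2^{-1/2}\|P_n-P_\infty\|_F\le\sqrt{2}\,\|A_n-A_\infty\|_F ,
\end{equation}
and the right-hand side tends to $0$ by hypothesis. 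This already proves the Corollary with an explicit Lipschitz constant.

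I would then stitch in the ``based on the propositions above'' reading. Appendix~F.2 makes $d_p$ a genuine metric on $\mathcal{G}(n,p)$, so the limit just obtained is honest convergence $V_n\to V_\infty$ in the projection-metric topology; Appendix~F.4 supplies $\|\log_{V_\infty}(V_n)\|_F\to 0$, and Appendix~F.5 the converse passage through $\exp_{V_\infty}$, so ``matrix convergence'', ``tangent-vector convergence'', and ``subspace convergence'' coincide near $V_\infty$. To legitimise the continuity clause, I would note that $x\mapsto f_\theta^{(n)}(x)$, followed by Schmidt (or QR) orthonormalisation and then by $A\mapsto AA^\top$, is a composition of maps continuous on the open dense full-column-rank locus, and that $A\mapsto AA^\top$ is constant on the right-$O(p)$ orbits, hence descends to a continuous map into $(\mathcal{G}(n,p),d_p)$; so the network genuinely realises a continuous map into the Grassmannian and convergence of its matrix outputs forces convergence of the induced subspaces. (That $A_n$ converges in the first place is exactly what the lower-bounded continuous loss together with completeness of the manifold delivers, i.e.\ the content of Appendix~F.6.)

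The step that actually needs care is the gauge ambiguity rather than any estimate: a point of $\mathcal{G}(n,p)$ is an orbit $[A]=\{AQ:Q\in O(p)\}$, so ``$\|A_n-A_\infty\|_F\to 0$'' must be read as the existence of representatives along which this holds, and one must verify that the conclusion does not depend on that choice. Routing everything through the gauge-invariant projector $P_n=A_nA_n^\top$ rather than through $A_n$ is precisely what removes the ambiguity, which is why the Lemma is stated via $d_p$. A secondary point is to confirm that the limiting $A_\infty$ retains full column rank, i.e.\ that the orthonormalisation does not degenerate in the limit, so that $V_\infty=\text{span}(A_\infty)$ is a bona fide point of $\mathcal{G}(n,p)$ rather than of a lower-dimensional Grassmannian; here one invokes that the training iterates stay on the manifold throughout (the loss being defined on $\mathcal{G}(n,p)$). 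Modulo these two bookkeeping observations, the Corollary follows directly from the displayed estimate.
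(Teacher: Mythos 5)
Your proposal is correct and follows essentially the same route as the paper: the Corollary is deduced by invoking the Lemma of Appendix~F.3 (matrix convergence implies subspace convergence via the Lipschitz bound on $\|P_n-P_\infty\|_F$), with the metric structure of F.2 and the log/exp equivalences of F.4--F.5 supplying the surrounding framework. You additionally make explicit the constant ($C=2$, via the decomposition $P_n-P_\infty=A_n(A_n-A_\infty)^\top+(A_n-A_\infty)A_\infty^\top$) and flag the gauge-ambiguity and rank-degeneracy issues, which the paper leaves implicit.
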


\section{G Implementation details and additional experiments}
\label{experiments}
This section provides additional experimental details on GMSF-Net regarding Grassmannian adaptive multi-subspace construction and interaction.
\subsection{G.1 Datasets and preprocessing}
\label{Datasets}
\textbf{FPHA:} The First-Person Hand Action Benchmark (FPHA) is a skeleton-based hand action recognition dataset comprising 1,175 video clips across 45 distinct hand action categories. Each video records the 3D positions (x, y, z coordinates) of 21 hand joints. During preprocessing, the 3D coordinates of each frame are flattened into a 63-dimensional feature vector. Based on the sequence of frame features, a $63 \times 63$ covariance or correlation matrix is then constructed to represent the entire action sequence in a structured manner. For experimental settings, we follow existing protocols~\cite{wang2021symnet}, using 600 clips for training and 575 for testing, or adopting a subject-based split where subjects 1–3 are used for training and 4–6 for validation. This structured representation facilitates the extraction of stable action features, which benefits subsequent classification tasks.

\textbf{HDM05:} HDM05 is a motion-capture-based action recognition dataset containing 2,273 skeleton sequences performed by various actors. Each frame records the 3D coordinates (x, y, z) of 31 human body joints, allowing each action sequence to be modeled as a $93 \times 93$ covariance matrix. To ensure a balanced class distribution, we follow the preprocessing protocol from~\cite{brooks2019riemannian}, removing under-represented classes and retaining 2,086 sequences across 117 action categories. In our experiments, we adopt the classification setting proposed by Huang and Gool~\cite{huang2018building}, where each video clip is classified into one of the 117 action classes. The covariance-based modelling effectively captures the spatiotemporal relationships among joints and is well-suited for structure-based action recognition tasks.

\textbf{SSVEP:} The MAMEM-SSVEP-II dataset~\cite{pan2022matt} consists of time-synchronized EEG recordings from 11 participants, collected using the EGI 300 Geodesic EEG System (256 channels, with a sampling rate of 250 Hz). In the SSVEP task, participants were asked to focus on one of five distinct visual stimuli flickering at different frequencies (6.66, 7.50, 8.57, 10.00, and 12.00 Hz), each lasting 5 seconds across multiple sessions. In each session, participants performed five trials corresponding to each of the five stimulus frequencies, with each trial divided into four 1-second segments (from 1 to 5 seconds after the cue). Each session consisted of 100 trials in total, ensuring a diverse and rich dataset. For data preprocessing, we follow the data preparation and performance evaluation procedures described in ~\cite{wang2024grassmannian} to ensure a fair comparison and analysis.

\textbf{Disease:} The Disease dataset is generated by simulating the SIR disease propagation model, where node labels indicate infection status and node features represent susceptibility to the disease~\cite{chami2019hyperbolic}. The dataset uses a tree structure to simulate the disease propagation process.

\textbf{PubMed:} The PubMed dataset includes 19,717 nodes, with each node representing a medical research paper. The labels on the nodes correspond to the specific academic subfields of the papers~\cite{sen2008collective}.

\textbf{Cora:} The Cora dataset contains 2,708 nodes, each representing a machine learning research paper. The node labels indicate the academic subfields to which the papers belong~\cite{sen2008collective}.

\textbf{Airport:} The Airport dataset consists of 2236 nodes, where each node represents an airport and edges represent airline routes. Each airport is assigned a label based on the population of the country it is located in, and each airport has a 4-dimensional feature vector containing geographic information~\cite{chami2019hyperbolic}.

\subsection{G.2 Implementation details}
\label{Implementation}
This section mainly introduces the experimental details mentioned above. In the output part of the model, we designed a projection layer and a fully connected layer as the core structure, combined with a Softmax layer to normalize the probability distribution. To train the model, we used hardware equipped with an E5-2699 v4 CPU and 256GB of RAM. The following will provide specific details for Sequential Data and Graph-based Data.
\subsubsection{Sequential Data Learning on Grassmannian Manifolds}
Given a video classification dataset, our goal is to classify each video into one of several categories. To illustrate this, we use the architecture of the PFHA task as an example. Due to the implementation of multiple stacked blocks in the experiment, as described in Optimisation Strategies section, we employ the SPDBN operation~\cite{brooks2019riemannian}, denoted as SPBBN$_{d_i}^{d_{i+1}}$: SPD($d_i$) $\rightarrow$ SPD($d_{i+1}$), which aims to standardize the distribution of manifold data and refine discriminative features. We adopt the model design scheme shown in fig. 2. In this task, the adaptive multi-subspace construction is defined as follows:
\begin{equation}
\begin{aligned}
S_i' = \big[ & \phi(W)_{i_1} \cdot \psi(X)_{i_1},\ \phi(W)_{i_2} \cdot \psi(X)_{i_2},\ \dots, \\
& \phi(W)_{i_p} \cdot \psi(X)_{i_p} \big] \in \mathcal{G}(63, 10),
\end{aligned}
\end{equation}

Where $\psi(X)$ represents the Gram-Schmidt orthogonal function applied to the input feature matrix $X \in \mathbb{R}^{63 \times 63}$ to generate a set of orthogonal subspaces; $\phi(W)$ represents the \textit{Softmax} applied to the initial weights $W \sim \mathcal{N}(0, \sigma^2)$; and $\{i_1, \dots, i_{10}\} = \mathcal{I}_{10}(\phi(W))$ denotes the indices corresponding to the top $10$ largest weights, used to select the most important $10$ subspaces. The interaction between the selected subspaces is then modeled as follows:
\begin{equation}
\Gamma_{new} = \exp_{\Gamma}\left(\Gamma, \alpha \cdot \frac{1}{N} \sum_{i=1}^{N} \log_{\Gamma}(\Gamma, P_{i}) \right),
\end{equation}
where \( \alpha \) is the step size, \( \Gamma \in \mathcal{G}(32, 10)\) is the current mean, and \( P_{i}\in \mathcal{G}(32, 10) \) represents each subspace within the same geometric framework(see GMSR section for details). \( \log(\cdot, \cdot) \) and \( \exp(\cdot, \cdot) \) need to satisfy the intrinsic geometry of the Grassmannian manifold. Finally, we flatten the matrices and apply a linear map from 1024 dimensions to 45 dimensions (representing 45 different actions). We use the aforementioned loss function to train the model.
\subsubsection{Graph-based Learning on Grassmannian Manifolds}
\label{Graph-based}
In all our experiments, graph-structured data is modeled on the Grassmannian manifold. To evaluate the performance of our proposed GMSF-Net on graph datasets, we follow a similar experimental setup to Chami et al.~\cite{chami2019hyperbolic}. Specifically, to reduce the number of parameters, we first apply a linear layer to project the input features to a lower dimension before introducing the AMSE encoder. For link prediction tasks, we use the Fermi-Dirac decoder, while a linear decoder~\cite{chami2019hyperbolic} is employed for node classification tasks.

In experiments involving adaptive subspace construction and fusion, data is projected onto Grassmannian manifolds of specified dimensions. The projection process is implemented via singular value decomposition (SVD), which constructs multiple low-dimensional subspaces that are then mapped onto the Grassmannian manifold. Each subspace is assigned an importance score, initialized by drawing $\omega \sim \mathcal{N}(0, 1)$, which guides the construction of multiple subspaces. The resulting subspaces are then passed to a multi-subspace fusion block, where various geometric mapping frameworks are applied to obtain diverse representations on the Grassmannian manifold. During training, the importance scores $\omega$ are optimised jointly with the rest of the network parameters using a unified optimiser. Further implementation details can be found in the accompanying GitHub repository.

In terms of configuration, we use 25 Grassmannian geometric frameworks for the Disease, Airport, and Cora datasets, and 30 for the Pubmed dataset. The number of training epochs is set to 500 for Disease, 10000 for Airport, and 5000 for both Pubmed and Cora. The Adam optimiser is used, with other hyperparameters (\textit{e.g.}, learning rate and weight decay) determined through random search.
\subsection{G.3 Hyper-parameters experiments}
\label{hyperparameter}
Table 1 presents the impact of subspace dimensionality ($p$) and the number of channels ($C$) under a fixed setting. The ablation studies were conducted on two representative datasets, HDM05 (motion capture) and FPHA (first-person hand action), to evaluate the generality of our design choices. The results demonstrate that both overly small and excessively large configurations hinder performance: the former limits representational capacity, while the latter introduces redundancy or noise. Moderate configurations (e.g., $p{=}10$ or $C{=}10/20$) consistently achieve optimal or near-optimal results across both datasets, highlighting the critical role of appropriate hyperparameter selection. Moreover, increasing either subspace dimensionality or channel number beyond a certain point yields diminishing returns, reflecting a saturation effect. All ablation experiments were conducted with one interaction block, underscoring the trade-off between sensitivity and robustness within this structural constraint.
\small
\setlength{\tabcolsep}{2pt}
\begin{table*}[ht]
  \label{tab}
  \centering
  \begin{tabular}{lcccccccc}
    \toprule
    \multicolumn{1}{c}{} & \multicolumn{4}{c}{\textbf{Subspace Dimensionality ($p$)}} & \multicolumn{4}{c}{\textbf{Number of Channels ($C$)}} \\
    \cmidrule(lr){2-5} \cmidrule(lr){6-9}  
    \textbf{Dataset} & \textbf{5} & \textbf{10} & \textbf{20} & \textbf{30} & \textbf{5} & \textbf{10} & \textbf{20} & \textbf{30} \\
    \midrule 
    HDM05   & 59.99\%±2.08 & \textbf{63.64\%}±1.24 & 63.43\%±1.46 & 53.84\%±1.16 & 61.44\%±0.71 & \textbf{63.64\%}±1.24 & 64.32\%±0.85 & 62.84\%±1.07 \\
    FPHA    & 85.27\%±2.27 & \textbf{90.43\%}±0.74 & 88.90\%±0.56 & 76.67\%±1.09 & 87.47\%±1.49 & 90.43\%±0.74 & 89.95\%±1.35 & \textbf{90.73\%}±1.62 \\
    \bottomrule
  \end{tabular}
  \caption{Hyperparameter Analysis: Subspace Dimensionality and Channel Number.}
\end{table*}

Table 2 presents an ablation study under the two interaction blocks (2 Blocks) setting, examining the impact of subspace dimensionality $p$ and channel architecture configuration in the adaptive subspace construction (AdMSC) process. Performance was evaluated across four channel configurations ({5,5,5}, {5,5,10}, {5,10,10}, and {5,10,20}) as $p$ increased from 5 to 20. The results demonstrate that balancing subspace dimensionality and channel number significantly affects model performance: increasing $p$ steadily improves results (\textit{e.g.}, from 85.04\% to 89.36\% in {5,5,5}), confirming that higher-dimensional subspaces better capture the Grassmannian manifold in multi-interaction architectures. Meanwhile, increasing channels notably boosts performance at medium dimensions ($p=10$), with {5,10,10} achieving peak accuracy of 90.70\% at $p=20$. However, excessive channels ({5,10,20}) cause performance decline (88.11\%) due to redundancy or training instability, showing diminishing returns. This study highlights the necessity of an optimal balance between subspace dimensionality and channel width for effective AdMSC.

\small
\setlength{\tabcolsep}{2pt}
\begin{table}[ht]
  \label{Hyperparameter}
  \centering
  \begin{tabular}{lccc}
    \toprule
    \multicolumn{1}{c}{} & \multicolumn{3}{c}{\textbf{Subspace Dimensionality ($p$)}} \\
    \cmidrule(lr){2-4} 
    \textbf{Channel} & \textbf{5} & \textbf{10} & \textbf{20} \\
    \midrule 
    \{5,5,5\}     & 85.04\%±2.61 & 87.57\%±2.06 & 89.36\%±0.73 \\
    \{5,5,10\}    & 84.96\%±2.26 & 86.28\%±2.22 & 89.04\%±0.96 \\
    \{5,10,10\}   & 85.72\%±2.87 & 87.17\%±1.42 & \textbf{90.70\%}±0.70 \\
    \{5,10,20\}   & \textbf{87.27\%}±3.22 & \textbf{88.43\%}±2.22 & 88.11\%±1.22 \\
    \bottomrule
  \end{tabular}
  \caption{Performance under Varying Subspace Dimensionality and Channel Settings with 2 Blocks.}
\end{table}


Table 3 presents the performance of GMSF-Net on the FPHA and HDM05 datasets under different subspace quantity settings (2, 5, 7, and 10) in the AdMSC, with and without the interaction mechanism. The primary purpose of this ablation study is to analyze the impact trend of a key hyperparameter in GMSF-Net, namely the number of subspaces in AdMSC, on the overall model performance.

The AdMSC demonstrates strong robustness to its subspace quantity hyperparameter, as evidenced by limited performance fluctuations (\textit{e.g.}, a difference of less than 1.46\% in the FPHA dataset with interaction) when varying subspaces from 2 to 10, with or without interaction mechanisms. While expanding subspaces may yield marginal gains (\textit{e.g.}, 10-subspace configurations), the non-linear relationship between computational and memory overhead, and performance enhancement necessitates strategic balancing, particularly since redundant subspaces may offer identical performance to moderate configurations (\textit{e.g.}, 7 or 10 subspaces). These findings advocate selecting mid-range subspace counts (5–7) to optimise the efficiency-effectiveness equilibrium, reserving higher configurations for scenarios where incremental gains justify substantial resource investments.
\begin{table*}[htbp]
  \label{AdMSC}
  \centering
  \begin{tabular}{lcccccccc}
    \toprule
    \multicolumn{1}{c}{} & \multicolumn{4}{c}{\textbf{Without Interaction}} & \multicolumn{4}{c}{\textbf{With Interaction}} \\
    \cmidrule(lr){2-5} \cmidrule(lr){6-9}  
    \textbf{Dataset} & \textbf{2} & \textbf{5} & \textbf{7} & \textbf{10} & \textbf{2} & \textbf{5} & \textbf{7} & \textbf{10} \\
    \midrule 
    FPHA   & 78.13\%±3.25 & 80.68\%±2.15 & 81.04\%±1.41  & \textbf{81.41\%}±2.03 & 88.81\%±0.95 & \textbf{90.43\%}±0.74 & 89.68\%±0.80 & 90.27\%±0.91 \\
    HDM05  & 52.91\%±1.36 & 56.49\%±1.37 & 56.85\%±1.60 & \textbf{58.87\%}±1.34 & \textbf{63.98\%}±1.56 & 63.64\%±1.24 & 63.24\%±1.84 & 63.36\%±1.10 \\
    \bottomrule
  \end{tabular}
  \caption{Ablation Study on the Number of Subspaces in AdMSC.}
\end{table*}

\subsection{G.4 Feature Visualization of GMSF-Net on SSVEP}
\label{visualization}
\cref{figure5} and~\cref{figure6} present the visualization results generated by GDLNet~\cite{wang2024grassmannian}. It is important to note that the EEG signals of subject11 are primarily concentrated in the Oz channel. Therefore, in the visualization analysis, a strong gradient response in the Oz channel is expected, serving as an indicator of whether the model successfully captures the key information. Compared with the visualizations of GMSF-Net shown in~\cref{figure3} and~\cref{figure4}, both models exhibit gradient responses concentrated in the Oz channel, clearly demonstrating the effectiveness of Riemannian methods in processing EEG signals. However, GMSF-Net shows a more focused and pronounced response within the 0.25 to 0.75 seconds, which aligns with prior studies on the relationship between SSVEP signals and the Oz channel in EEG research~\cite{han2018highly,herrmann2001human}, whereas GDLNet displays a relatively dispersed channel-level response. This finding further validates the superiority of the proposed Grassmannian multi-subspace interaction mechanism in extracting essential geometric features over conventional Grassmannian-based models.

The above analysis is based on the spatial distributions corresponding to different visual stimuli at various frequencies in the MAMEM-SSVEP-II dataset. The topographic maps under all frequency conditions exhibit similar distribution patterns, consistently showing strong gradient responses in the Oz channel across all frequencies. Moreover, despite variations in visual stimulation frequency, the spatial location of the gradient responses on the scalp remains remarkably stable throughout the entire time period, further reflecting the model’s sensitivity to stable geometric structures.

In summary, GMSF-Net can effectively identify and capture subtle discrepancies hidden within overall similar spatial distributions, particularly in the dynamic responses of the Oz channel under five frequency conditions. This capability highlights the model’s sensitivity and precision in decoding non-stationary brain activity from SSVEP-EEG signals. The above visualization results provide strong evidence for the superiority and robustness of the proposed GMSF-Net in modelling dynamic EEG signals.
\begin{figure*}[htbp]
\centering
\includegraphics[width=\linewidth,trim={5 20 3 20},clip=true]{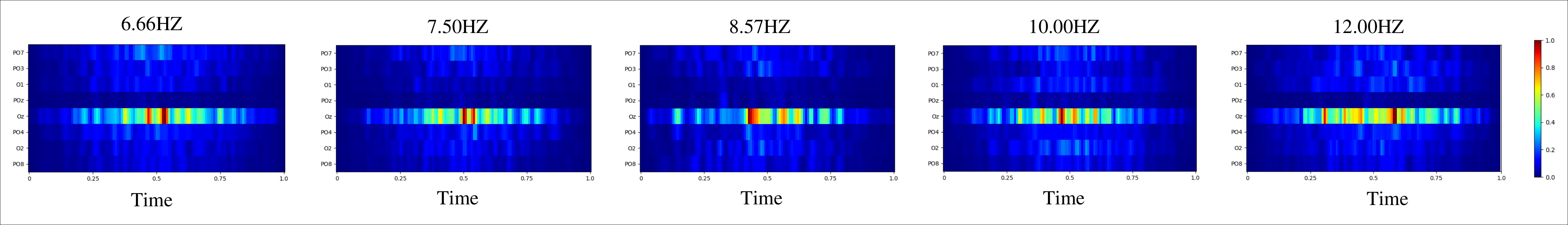}
\caption{Based on GMSF-Net, the heatmaps of absolute gradient responses (computed following the method in~\cite{pan2022matt}) are presented across five frequency categories on the MAMEM-SSVEP-II dataset (S11). In each heatmap, the x-axis represents time, and the y-axis corresponds to EEG channels.}
\label{figure3}
\end{figure*}

\begin{figure*}[htbp]
\centering
\includegraphics[width=\linewidth,trim={5 20 3 20},clip=true]{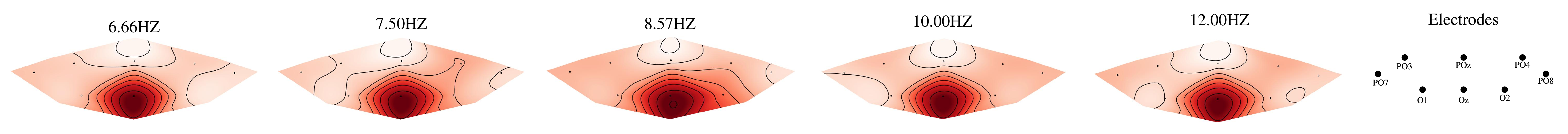}
\caption{The spatial topomaps of the mean absolute gradient responses across time for the S11 subject on the MAMEM-SSVEP-II dataset, based on GMSF-Net (computed as in~\cite{pan2022matt}. The brain region marked in dark red corresponds to channel Oz, which exhibits strong gradient activation across the visual cortex under all stimulation frequencies.
}
\label{figure4}
\end{figure*}

\begin{figure*}[htbp]
\centering
\includegraphics[width=\linewidth,trim={5 20 3 20},clip=true]{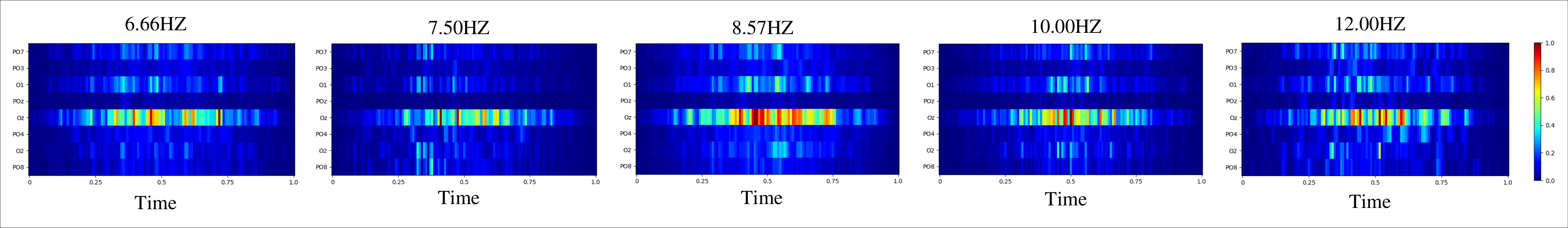}
\caption{The heatmaps of five frequency classes of the MAMEM-SSVEP-II dataset demonstrated by GDLNet.}
\label{figure5}
\end{figure*}

\begin{figure*}[htbp]
\centering
\includegraphics[width=\linewidth,trim={5 20 3 20},clip=true]{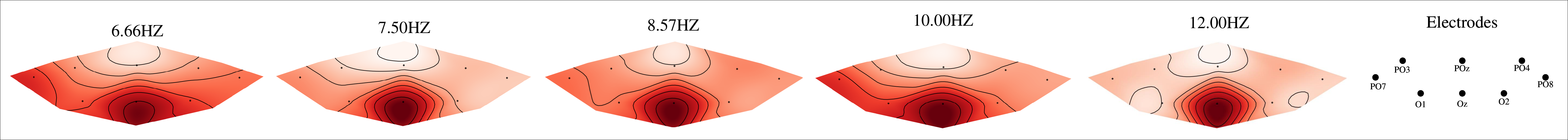}
\caption{The brain topology maps obtained on the MAMEM-SSVEP-II dataset demonstrated by GDLNet.}
\label{figure6}
\end{figure*}

\end{document}